\documentclass[11pt]{article}

\usepackage[preprint]{acl}

\usepackage[utf8]{inputenc}
\usepackage[T1]{fontenc}
\usepackage{times}
\usepackage{latexsym}

\usepackage{amsmath}
\usepackage{amssymb}
\usepackage{amsthm}
\usepackage{amsfonts}

\usepackage{graphicx}
\usepackage{booktabs}
\usepackage{multirow}
\usepackage{enumitem}
\usepackage{subcaption}

\usepackage{url}
\usepackage[nameinlink,noabbrev,capitalise]{cleveref}
\crefname{assumption}{Assumption}{Assumptions}
\Crefname{assumption}{Assumption}{Assumptions}

\usepackage{microtype}
\usepackage{xcolor}
\usepackage{tcolorbox}
\tcbuselibrary{skins,breakable}

\hypersetup{hidelinks}

\theoremstyle{plain}
\newtheorem{theorem}{Theorem}
\newtheorem{lemma}{Lemma}

\theoremstyle{definition}
\newtheorem{definition}{Definition}
\newtheorem{assumption}{Assumption}

\theoremstyle{remark}

\usepackage{colortbl}
\definecolor{TheoremFrame}{HTML}{292074}
\definecolor{TheoremBg}{HTML}{F6F5FC}
\definecolor{AssumptionFrame}{HTML}{478978}
\definecolor{AssumptionBg}{HTML}{F8FEFC}
\definecolor{DefaultRow}{HTML}{EEF7F4}

\newtcolorbox{thmframe}{
  enhanced, breakable,
  colback=TheoremBg, colframe=TheoremFrame,
  boxrule=0.9pt, arc=3pt,
  left=3mm, right=3mm, top=1mm, bottom=1mm
}

\newtcolorbox{assframe}{
  enhanced, breakable,
  colback=AssumptionBg, colframe=AssumptionFrame,
  boxrule=0.8pt, arc=3pt,
  left=3mm, right=3mm, top=1mm, bottom=1mm
}

\newtcolorbox{theorybox}[1]{
  enhanced, breakable,
  colback=AssumptionBg, colframe=AssumptionFrame,
  boxrule=0.8pt, arc=3pt,
  left=4mm, right=4mm, top=2mm, bottom=2mm,
  title=\textbf{#1},
  fonttitle=\normalsize
}

\newtcolorbox{punchybox}{
  enhanced, breakable,
  colback=TheoremBg, colframe=TheoremFrame,
  boxrule=0.6pt, arc=3pt,
  left=4mm, right=4mm, top=2mm, bottom=2mm
}

\title{Reasoning without Gold Standards:\\ A Proxy-Judge Theory of Autoformalization}

\author{
  Lei Xu$^{1,2}$ \quad Xin Quan$^{1}$ \quad André Freitas$^{1,3,4}$ \\
  $^{1}$Idiap Research Institute, Switzerland \\
  $^{2}$École Polytechnique Fédérale de Lausanne (EPFL), Switzerland \\
  $^{3}$Department of Computer Science, University of Manchester, United Kingdom \\
  $^{4}$CRUK National Biomarker Centre, University of Manchester, United Kingdom \\
  \texttt{\{lei.xu, xin.quan, andre.freitas\}@idiap.ch}
}

\begin{document}
\maketitle

\begin{abstract}
Complex reasoning tasks increasingly require systems to produce outputs whose correctness cannot be judged by exact match against a single reference. Autoformalization (AF) is a representative example; it asks a model to translate informal mathematical or logical reasoning into a formally checkable object, yet expert-validated formalizations do not scale beyond toy cases and a single informal argument can admit many valid formal renderings. Progress therefore depends on whether partial, structured proxies can substitute for exact references.

We introduce a reference-free proxy-judge framework for AF that replaces gold-standard matching with a vector of per-axis property checks. The framework organizes the proxy along three structural scopes that cover global properties of the elicited object, per-module properties internal to its sub-components, and cross-domain properties that re-align it to the informal source, and aggregates each axis into a verdict vector. The vector drives a reflective refinement loop in which a violated coordinate routes the controller to a matching repair target, so each iteration changes only what is judged wrong.

Under bounded judge noise, the expected intrinsic gap contracts geometrically to a noise-dependent plateau. Across seven formalization backbones on miniF2F, ProofNet, e-SNLI, and ProntoQA, refinement consistently lifts Pass Rate over the single-shot ICL baseline, and the per-axis proxy outperforms a matched scalar proxy on benchmarks where the baseline has room to improve. Structured proxy judgments therefore provide both a practical refinement signal and a theoretical handle on convergence when exact references are unavailable.\end{abstract}

% =====================================================
\section{Introduction}\label{sec:intro}
% =====================================================
Reasoning benchmarks often assume that correctness can be specified in advance: an expert provides a reference output, and predictions are judged by comparison to it. For highly specialized domains such as mathematics and formal logic, this assumption breaks once complexity scales, since constructing a reference may itself require solving the task: choosing a representation, recovering implicit assumptions, resolving notation and background facts, and certifying coherence inside a larger theory \citep{wu2022autoformalization,jiang2023draft,azerbayev2023proofnet}. Beyond toy cases, gold-standard annotation is therefore not only expensive but can also become fundamentally non-scalable.

The difficulty is compounded by non-uniqueness. A single informal argument can admit many valid renderings, differing in definitions, encodings, lemma decompositions, proof strategies, or library alignments \citep{zhang2025beyond}. Any one gold standard captures only one point in this equivalence class, so exact-match evaluation can penalize correct alternatives while collapsing distinct kinds of errors into a single score.

This creates a scalability dilemma: the tasks most worth evaluating are also those for which exhaustive expert annotation is least feasible. Evaluation must therefore move from reference reproduction to structured evidence of correctness. Instead of asking whether a candidate matches a canonical solution, we ask whether it satisfies localized, task-internal constraints that provide proxy evidence of correctness without requiring a complete expert reference for every instance.

Autoformalization (AF) is a representative case in which this dilemma is especially sharp. AF asks a model to translate an informal mathematical or logical argument into a formally checkable object \citep{wu2022autoformalization,jiang2023draft,azerbayev2023proofnet}. A proof assistant can reject malformed syntax, inconsistent types, broken bindings, and failed elaborations \citep{polu2020generative,yang2023leandojo}; however, the kernel alone cannot decide whether the formal artifact preserves the intended meaning of the informal source \citep{zhang2025beyond}. At the same time, obtaining expert-written formalizations at scale is prohibitive, and treating one formalization as the unique reference is conceptually wrong because many correct formal renderings may exist.

The key observation of this paper is that AF is globally hard to annotate but locally amenable to systematic assessment once the linguistic and reasoning properties have been made explicit. The signal that drives evaluation already exists inside the formal system itself. Once an argument is rendered as explicit statements, inferential links, and grounding into a shared context, correctness decomposes into finitely many locally checkable constraints whose violations admit bounded, targeted repair. AF is therefore hard to verify globally but easy to verify locally once it is made explicit, and this asymmetry is what we exploit.

Building on this observation, we introduce a reference-free proxy-judge framework for AF. The framework decomposes evaluation into audit-unit axes organized by three structural scopes: global properties of the elicited object, per-module properties internal to its sub-components, and cross-domain properties that align it back to the informal source and the per-instance fact context. The resulting verdict vector drives a reflective refinement loop in which each detected violation routes the controller toward a localized repair, turning refinement into constraint-directed repair rather than undifferentiated retry.

\begin{figure}[t]
  \centering
  \includegraphics[width=\linewidth]{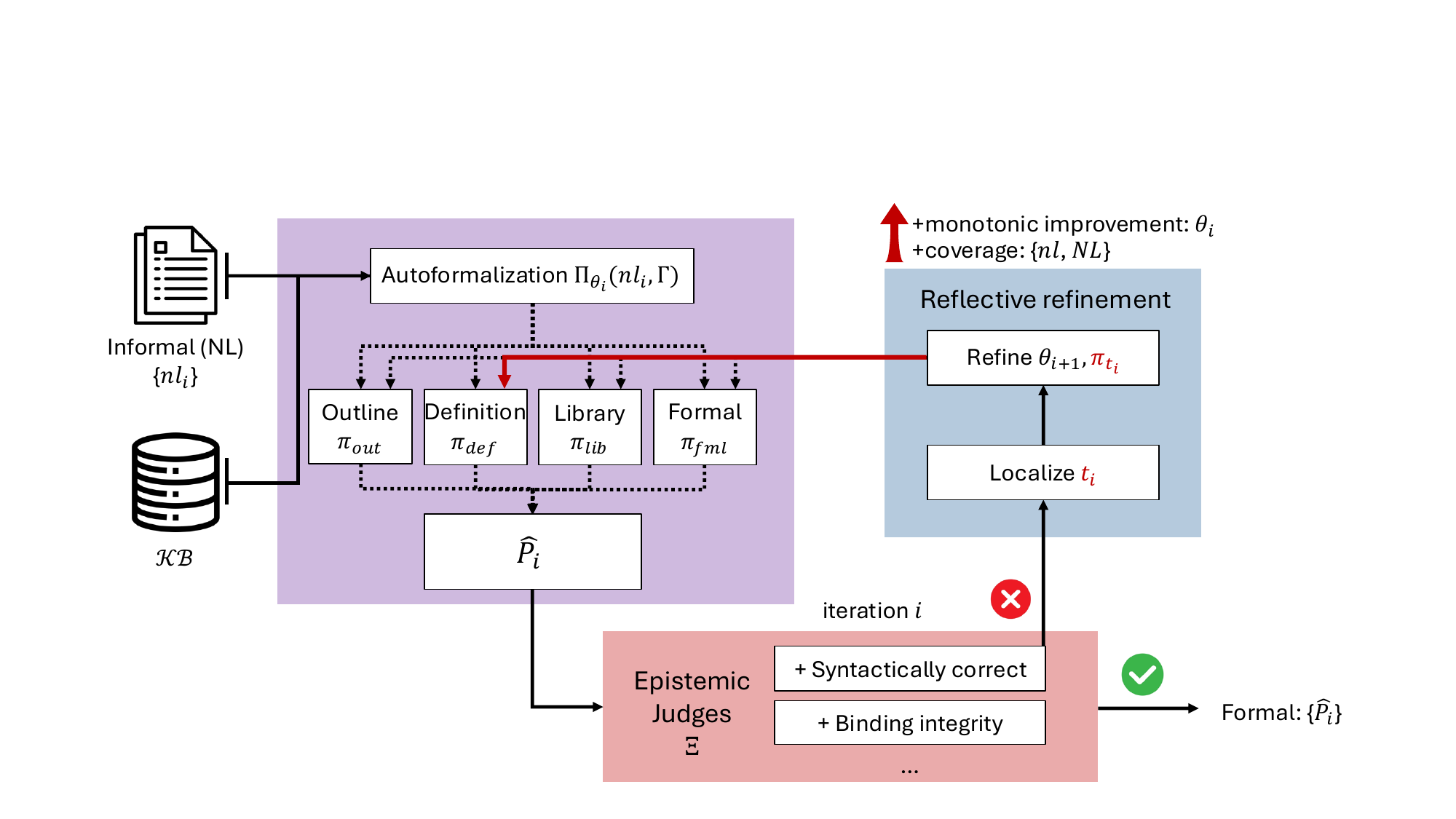}
  \caption{Reflective Refinement Loop. At each iteration the system proposes a candidate formalization, scores it on a fixed set of property axes, finds the worst-scoring axis, and applies a repair targeted to that axis.}
  \label{fig:loop}
\end{figure}

From this local-verifiability gap, we decompose AF evaluation into a finite vector of intrinsic properties drawn from the language and inference system itself, and estimate that vector with a family of specialized judges acting on small, controlled contexts. Recent work shows that calibrated LLM judges can stand in for human evaluation in formal mathematical reasoning \citep{zhang2025beyond,zheng2023judging}, and that step-level verifiers outperform outcome-only scalars for learned reasoning signals \citep{lightman2024lets,uesato2022solving}. Both observations fit a reflective refinement loop that proposes a candidate, measures its properties, localizes a violation, and repairs it (see \cref{fig:loop}); we analyze the convergence properties of this loop when the judges themselves are imperfect.

In summary, the contributions are organized as follows.
\begin{enumerate}[leftmargin=1.5em,itemsep=1pt,topsep=2pt]
  \item \emph{Framework.} We design a reference-free framework for autoformalization that decomposes evaluation into audit-unit axes spanning three structural scopes---global, per-module, and cross-domain to the informal source---and closes the loop with a prover-grounded return policy, casting refinement as constraint-directed optimization in which each violation maps to an identifiable repair target (\cref{sec:prelim}).
  \item \emph{Convergence analysis.} Under a drift assumption that we adopt as a primitive of the analysis (\cref{ass:drift}, with microscopic motivation in \cref{app:drift-derivation}), we give a drift-Lyapunov argument showing the expected intrinsic gap contracts geometrically to an $O(\eta)$ plateau (\cref{thm:plateau}); the bound targets the asymptotic gap height, while concurrent work on LLM-verifier loops \citep{llmverifier2025fourdelta} bounds expected iteration count.
  \item \emph{Empirical result.} We evaluate seven formalization backbones on four datasets, and our refinement loop consistently outperforms the single-shot ICL baseline; on the frontier backbones, average gains reach $19\%$ on ProofNet and $16\%$ on e-SNLI. Within the loop, pass rate further rises by $8.2$--$8.4\%$ on ProofNet and $17$--$18\%$ on e-SNLI when we replace a single end-to-end judge with a per-axis proxy. The granularity gain captures a sizeable share of the loop's overall gain over the ICL baseline on these two benchmarks ($15$--$21\%$ on ProofNet, $2$--$25\%$ on e-SNLI), which indicates that structured per-axis feedback, rather than raw refinement compute, is the main source of the empirical advantage (\cref{ssec:h4}).
\end{enumerate}

% =====================================================
\section{Related Work}\label{sec:related}
% =====================================================
\paragraph{Autoformalization and neural theorem proving.}
LLM-guided autoformalization \citep{wu2022autoformalization,jiang2023draft,azerbayev2023proofnet} and neural proof search \citep{polu2020generative,zheng2021minif2f,yang2023leandojo} use proof assistants as a binary success signal. Reference-free iterative refinement was recently formalized for full-theorem AF with four scalar acceptance criteria \citep{zhang2026monotonic}, and explanation-style AF gains were reported on Isabelle natural-language inference \citep{quan2025faithful}. These frameworks leave open how to exploit partial, local structure when proof completion is unavailable or the judge is noisy. We instead target proxy signals induced by explicit inference constraints and decompose evaluation into a property-vector that a per-axis controller can act on.

\paragraph{Inference-time refinement and judge calibration.}
Self-Refine and Reflexion use verbal feedback without weight updates \citep{madaan2023selfrefine,shinn2023reflexion}; Tree-of-Thought \citep{yao2023tot} and self-consistency \citep{wang2023selfconsistency} aggregate or score candidate traces. Step-level process supervision \citep{uesato2022solving,lightman2024lets} and calibrated LLM-as-judge \citep{zheng2023judging,ouyang2022training,bai2022constitutional} push evaluation below the outcome level. These methods motivate local evaluation, but they report no finite property vector whose per-axis errors govern refinement, and no convergence statement under imperfect judges. Our framework supplies both ingredients: a structured per-axis verdict $\xi(\widehat{P})\in[0,1]^m$ organized by global, per-module, and cross-domain scopes, whose violated coordinate routes the controller to the corresponding repair target, and a drift-Lyapunov analysis that exposes the asymptotic noise floor. 
Concurrent work on LLM-verifier loops \citep{llmverifier2025fourdelta} bounds expected iteration count rather than asymptotic gap height.

\begin{figure*}[t]
  \centering
  \includegraphics[width=\linewidth]{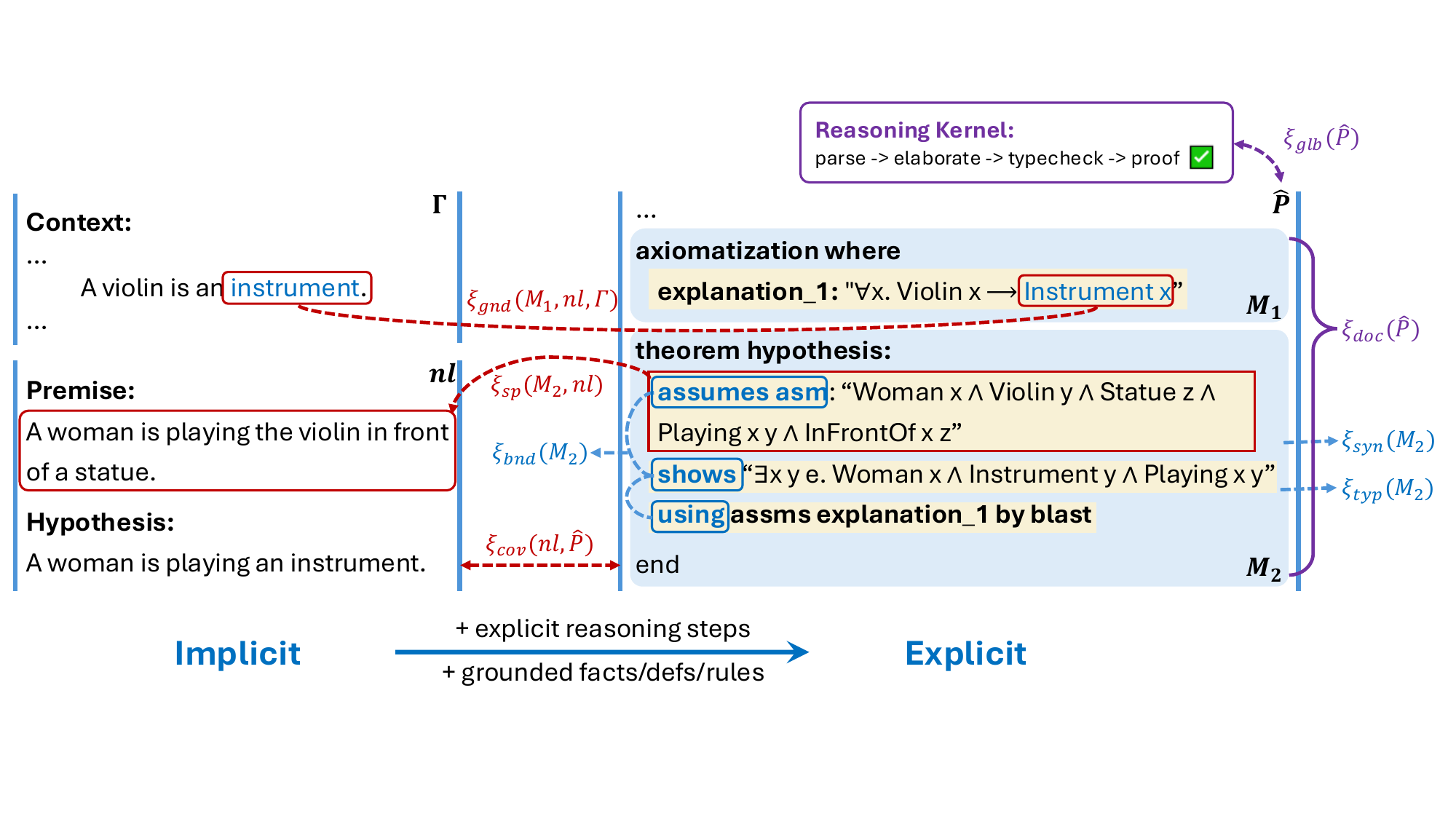}
  \caption{Implicit-to-Explicit Transition. Correctness is checked at three scopes: the whole object, each module, and the alignment back to the informal source $nl$ and context $\Gamma$. Each scope has its own judge.}
  \label{fig:imp_exp}
\end{figure*}

% =====================================================
\section{Proposed Framework}\label{sec:prelim}
% =====================================================
We now formalize the framework introduced in \cref{sec:intro} as a staged operator $\Pi$ together with a judge family $\Xi$ acting on a structured elicited object (see \cref{fig:imp_exp}). Concretely, $\Pi$ takes the informal input $nl$ together with a per-instance fact context $\Gamma$, and produces
\begin{equation}
\widehat{P} = \Pi(nl,\Gamma) = \{M_1,M_2,\ldots,M_K\},
\label{eq:elicit}
\end{equation}
where each module $M$ holds a coherent set of explicit statements $\widehat{p}_j$ together with their declarations and the references they make into $\Gamma$. The context $\Gamma$ collects per-instance facts surfaced for the example at hand, such as ``a violin is an instrument''. We write $\widehat{P}_i$ for the iteration-$i$ candidate and $\widehat{P}^\star$ for a correct elicited target, both living in the ambient space $\mathcal{E}$; $\Gamma$ remains fixed across iterations.

Because the same $nl$ admits many valid formalizations, exact-match scoring against any single reference $\widehat{P}^\star$ unfairly punishes alternatives. We instead evaluate $\widehat{P}$ along a finite vector of property probes computable on the explicit side. Some probes are kernel-side and deterministic, while others cross back to the informal source. None consults a reference formalization, so the family bypasses gold-standard scarcity at the global level. \Cref{ssec:prelim-judges} formalizes the probes as judges, and \cref{ssec:prelim-loop} closes the loop into a reflective refinement cycle (\cref{fig:loop}).

\subsection{Judges and Local Properties}\label{ssec:prelim-judges}

A finite family of $m$ judges $\Xi=\{\xi_1,\ldots,\xi_m\}$, each a computable functional $\xi_k:\mathcal{E}\to[0,1]$, evaluates properties of $\widehat{P}$ and its bindings; in this paper we use $m=8$, with one judge per audit axis, partitioned by the three structural scopes introduced in \cref{sec:intro}.

\emph{Global scope} (judge consumes the whole elicited object $\widehat{P}$; kernel-decided):
\begin{itemize}[leftmargin=1.5em,itemsep=1pt]
  \item $\xi_{\mathrm{glb}}(\widehat{P})$: the prover completes the full elaboration and inference pass on $\widehat{P}$ without crash, timeout, or non-termination.
  \item $\xi_{\mathrm{doc}}(\widehat{P})$: $\widehat{P}$ is a structurally valid source artifact under the prover's document conventions (declaration order, scope brackets, presence of required resources).
\end{itemize}

\emph{Per-module scope} (judge consumes a single module $M\in\widehat{P}$; kernel-decided):
\begin{itemize}[leftmargin=1.5em,itemsep=1pt]
  \item $\xi_{\mathrm{syn}}(M)$: every statement inside module $M$ is accepted by the parser and rule schemas.
  \item $\xi_{\mathrm{typ}}(M)$: the types and schemas declared inside $M$ are mutually consistent across its statements.
  \item $\xi_{\mathrm{bnd}}(M)$: the inferential links inside $M$ wire its local premises and conclusions through structurally complete steps and preserve variable bindings.
\end{itemize}

\emph{Cross-domain scope} (judge additionally consults the informal source $nl$ and per-instance context $\Gamma$; externally judged):
\begin{itemize}[leftmargin=1.5em,itemsep=1pt]
  \item $\xi_{\mathrm{gnd}}(M,nl,\Gamma)$: every predicate, constant, or symbol used inside $M$ traces back to an entity introduced by $nl$ or by $\Gamma$.
  \item $\xi_{\mathrm{sp}}(M,nl)$: each explicit statement inside $M$ aligns with some implicit claim in $nl$, with the alignment constructed by the judge.
  \item $\xi_{\mathrm{cov}}(nl,\widehat{P})$: the key claim units of $nl$ are represented explicitly somewhere in $\widehat{P}$.
\end{itemize}
Each violation points to an identifiable repair target.

\subsection{Reflective Refinement Loop}\label{ssec:prelim-loop}

A controller aggregates these localized signals, picks which subcomponent to repair next, and repeats the cycle until an acceptance threshold is met. We index the AF process by $i$ with parameters $\theta_i$ that collect prompts, transformations, or policy parameters. At each iteration:
\begin{enumerate}[leftmargin=1.5em,itemsep=1pt]
  \item \textbf{Propose.} Produce $\widehat{P}_i$ via $\Pi_{\theta_i}(nl,\Gamma)$.
  \item \textbf{Measure.} Evaluate the judge vector $\xi(\widehat{P}_i)=(\xi_1(\widehat{P}_i),\ldots,\xi_m(\widehat{P}_i))$.
  \item \textbf{Localize.} Select a target subcomponent $t_i$ responsible for the strongest violation.
  \item \textbf{Repair.} Update $\theta_{i+1}\leftarrow\mathrm{Refine}(\theta_i,t_i,\xi(\widehat{P}_i))$.
\end{enumerate}
The loop is constraint-directed, with update axes drawn from the finite constraint families that \cref{sec:theory} formalizes; \cref{fig:loop} summarizes the resulting pipeline.

\paragraph{Prover-grounded return policy.} Because the LLM-evaluated axes $\xi_{\mathrm{sp}}, \xi_{\mathrm{cov}}$ may report false violations, a repair guided by them can break a kernel-verifiable axis without the judge noticing. We therefore return the latest iterate that the prover accepts, and roll back whenever the prover moves from accept to reject between consecutive iterations.

% =====================================================
\section{Proxy Theory Grounded in Explicit Language and Inference}\label{sec:theory}
% =====================================================
The reflective loop of \cref{sec:prelim} converges at a rate determined by judge noise and a contraction parameter.

\subsection{Intrinsic Properties and the Proxy Score}\label{ssec:setup}

To measure intrinsic progress without a reference formalization, we attach to the elicited object $\widehat{P}\in\mathcal{E}$ a latent quality vector $\boldsymbol{q}:\mathcal{E}\to[0,1]^m$ whose components measure the satisfaction of $m$ intrinsic properties. 
Each component $q_k(\widehat{P})$ records the fraction of audit units of property $k$ that pass on $\widehat{P}$, in a strict modular sense in which a single internal violation fails its host module.

Each component $\xi_k(\widehat{P})\in[0,1]$ aggregates the binary judge verdicts on the audit units of property $k$, with the audit-unit definition and aggregation formula detailed in \cref{app:aux-bounds}.

We adopt $\boldsymbol{q}(\widehat{P}^\star)=\mathbf{1}$ as the marker of intrinsic correctness. Fixing weights $w\in\mathbb{R}^m_{\ge 0}$ with $\|w\|_1=1$, the intrinsic progress measure is $\mu^\star(\widehat{P})=\langle w,\boldsymbol{q}(\widehat{P})\rangle$ with observable counterpart $\widehat{\mu}(\widehat{P})=\langle w,\boldsymbol{\xi}(\widehat{P})\rangle$.

\begin{assframe}
\begin{assumption}[Audit sufficiency of the property vector]\label{ass:complete}
$\boldsymbol{q}(\widehat{P})=\mathbf{1}$ implies $\widehat{P}$ passes every prover-decidable check and aligns with the informal source $nl$ on every cross-domain axis within the judge family $\Xi$ of \cref{ssec:prelim-judges}.
\end{assumption}
\end{assframe}

\Cref{thm:plateau} is a score-level result and uses only this audit-level sufficiency.

\subsection{Refinement and the Drift Assumption}\label{ssec:assumptions}

A refinement step takes the current object and the judge readings and returns the next iterate, $\widehat{P}_{i+1}=\mathsf{Refine}(\widehat{P}_i,\boldsymbol{\xi}(\widehat{P}_i))$; $\mathsf{Refine}$ folds the Localize and Repair sub-steps of \cref{ssec:prelim-loop} (target selection on $\boldsymbol{\xi}$, parameter update to $\theta_{i+1}$, and reproposal through $\Pi_{\theta_{i+1}}$), and the analysis below uses only its input/output signature.

Following the drift-Lyapunov tradition for stochastic optimization with bounded noise \citep{bottou2018optimization}, we adopt an expected-contraction-with-residual form on the gap dynamics.

\begin{assframe}
\begin{assumption}[Bounded regress with contractive drift]\label{ass:drift}
There exist constants $\lambda\in(0,1]$ and $b\ge 0$, independent of $i$ and of $\eta$, such that
\begin{equation}
\begin{aligned}
\mathbb{E}\!\left[\,g_{i+1}\mid \widehat{P}_i\,\right]
 &\le (1-\lambda)\,g_i + b\,\eta, \\
g_i &:= 1-\mu^\star(\widehat{P}_i),
\end{aligned}
\label{eq:drift}
\end{equation}
for all $i$, where $\eta$ is the per-audit-unit judge misclassification rate of \cref{ass:binary}.
\end{assumption}
\end{assframe}

The factor $1-\lambda$ contracts the remaining gap by a fixed fraction each round, and the additive $b\,\eta$ absorbs the residual judge noise and imperfect-localization terms. The microscopic motivation in \cref{app:drift-derivation} suggests that the underlying per-step repair probability scales as $\rho\ge\rho_0-c\eta$, so the contraction rate $\lambda$ implicitly degrades as $\eta$ grows. We treat $\lambda$ and $b$ as constants in the analysis and interpret the plateau bound $b\eta/\lambda$ as valid in the regime $\eta\ll\rho_0/c$.

\subsection{Plateau Convergence and Correctness}\label{ssec:plateau}

Iterating the drift inequality across $i$ produces a closed-form bound on the intrinsic gap $g_i:=1-\mu^\star(\widehat{P}_i)$.

\begin{thmframe}
\begin{theorem}[Geometric convergence to a noise-dependent plateau]\label{thm:plateau}
Under \cref{ass:drift}, the expected intrinsic gap satisfies
\begin{equation}
\begin{aligned}
\mathbb{E}[g_{i+1}] &\le (1-\lambda)\,\mathbb{E}[g_i]+b\,\eta, \\
\limsup_{i\to\infty}\mathbb{E}[g_i] &\le \tfrac{b\,\eta}{\lambda}, \\
\liminf_{i\to\infty}\mathbb{E}[\mu^\star(\widehat{P}_i)] &\ge 1-\tfrac{b\,\eta}{\lambda},
\end{aligned}
\label{eq:plateau}
\end{equation}
and the plateau approaches $1$ when $\eta\to 0$.
\end{theorem}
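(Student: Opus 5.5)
The plan is to take total expectations in \cref{ass:drift} and then unroll the resulting scalar linear recursion. First, note that $g_i=1-\mu^\star(\widehat{P}_i)=1-\langle w,\boldsymbol{q}(\widehat{P}_i)\rangle$. Since $\boldsymbol{q}\in[0,1]^m$, $w\ge 0$ and $\|w\|_1=1$, we have $g_i\in[0,1]$, so all expectations below exist and are finite.

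Next, apply the tower property $\mathbb{E}[g_{i+1}]=\mathbb{E}\bigl[\mathbb{E}[g_{i+1}\mid\widehat{P}_i]\bigr]$ to \eqref{eq:drift}. Monotonicity and linearity of expectation then give $\mathbb{E}[g_{i+1}]\le(1-\lambda)\mathbb{E}[g_i]+b\eta$. This is the first line of \eqref{eq:plateau}. The constants $\lambda,b,\eta$ are deterministic and independent of $i$, so they pass through the expectation unchanged.

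To obtain the plateau, set $e_i:=\mathbb{E}[g_i]-b\eta/\lambda$, which is well defined because $\lambda>0$. Subtracting the fixed point $b\eta/\lambda$ from both sides of the recursion gives $e_{i+1}\le(1-\lambda)e_i$. Because $1-\lambda\in[0,1)$ is nonnegative, this inequality can be iterated, yielding $e_i\le(1-\lambda)^i e_0$. Equivalently,
\begin{equation*}
\mathbb{E}[g_i]\le(1-\lambda)^i\,\mathbb{E}[g_0]+\tfrac{b\eta}{\lambda}\bigl(1-(1-\lambda)^i\bigr),
\end{equation*}
which also follows directly by induction on $i$ using the geometric sum $\sum_{j<i}(1-\lambda)^j$. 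Since $\mathbb{E}[g_0]\le 1$ and $(1-\lambda)^i\to 0$, taking $\limsup$ gives $\limsup_i\mathbb{E}[g_i]\le b\eta/\lambda$. For the edge case $\lambda=1$, we use the convention $0^0=1$, and the bound holds trivially for $i\ge1$. The rate is geometric with ratio $1-\lambda$.

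For the third line, write $\mu^\star(\widehat{P}_i)=1-g_i$. Linearity gives $\mathbb{E}[\mu^\star(\widehat{P}_i)]=1-\mathbb{E}[g_i]$, and the identity $\liminf(1-x_i)=1-\limsup x_i$ converts the $\limsup$ bound into $\liminf_i\mathbb{E}[\mu^\star(\widehat{P}_i)]\ge 1-b\eta/\lambda$. Finally, $\lambda$ and $b$ are independent of $\eta$ by \cref{ass:drift}, so $b\eta/\lambda\to0$ as $\eta\to0$. Combined with $\mu^\star\le1$, this shows the plateau approaches $1$.

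There is no real obstacle here. The only point needing care is that iterating an inequality (rather than an equality) requires $1-\lambda\ge 0$ so that multiplication preserves direction, which is exactly why $\lambda\le1$ is assumed. I would also remark that the $\eta\to0$ statement relies on treating $\lambda$ as $\eta$-independent, consistent with the regime $\eta\ll\rho_0/c$ discussed after \cref{ass:drift}.
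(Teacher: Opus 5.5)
Your proof is correct and matches the paper's argument. Both take total expectations in \cref{ass:drift}, unroll the linear recursion into $\mathbb{E}[g_i]\le(1-\lambda)^i\,\mathbb{E}[g_0]+\tfrac{b\eta}{\lambda}\bigl(1-(1-\lambda)^i\bigr)$, and pass to $\limsup$ and $\liminf$; your fixed-point shift $e_i$ and your explicit checks of integrability and of $1-\lambda\ge 0$ are just more careful bookkeeping of the same steps.
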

\end{thmframe}

Each round shrinks the expected gap by a factor $1-\lambda$ until further contraction is balanced by the noise residual $b\eta/\lambda$, so the plateau height is proportional to $\eta$ (proof in \cref{app:proofs}). Reducing $\eta$ therefore tightens the plateau directly, and repeated judging provides such a reduction.

\begin{thmframe}
\begin{lemma}[Majority vote reduces effective uncertainty]\label{lem:majvote}
Under \cref{ass:binary}, write $\eta_{\mathrm{eff}}$ for the per-audit-unit misclassification rate of the majority vote over $T$ independent calls on the same audit unit. Then
\begin{equation}
\eta_{\mathrm{eff}} \le \exp\!\bigl(-2T\bigl(\tfrac{1}{2}-\eta\bigr)^2\bigr).
\label{eq:majvote}
\end{equation}
\end{lemma}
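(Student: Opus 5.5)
The plan is the standard Hoeffding argument for majority vote over independent noisy binary verdicts. By \cref{ass:binary}, each call of the judge on a fixed audit unit returns a binary verdict. That verdict is wrong with probability at most $\eta<\tfrac12$, and the $T$ calls are independent. So I introduce indicator variables $Z_1,\ldots,Z_T\in\{0,1\}$, where $Z_t=1$ exactly when call $t$ misclassifies the unit. The $Z_t$ are independent with $\mathbb{E}[Z_t]\le\eta$. If the misclassification rate is only bounded above, or varies across calls, I will use a coupling or monotonicity argument to reduce to the worst case $\mathbb{E}[Z_t]=\eta$. The tail probability of the sum is monotone in each mean, so this reduction is harmless.

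Next I characterise when the majority vote is wrong. This can happen only if at least half of the calls err, that is, $S:=\sum_t Z_t\ge T/2$. For even $T$ I count ties as errors; this is the conservative choice, and the bound must cover it. Hence
\[
\eta_{\mathrm{eff}}\le \Pr\!\bigl[S\ge T/2\bigr]=\Pr\!\bigl[S-\mathbb{E}S\ge T(\tfrac12-\eta)\bigr].
\]
Hoeffding's inequality applies to a sum of $T$ independent $[0,1]$-valued variables with deviation $t=T(\tfrac12-\eta)>0$. It gives $\exp(-2t^2/T)=\exp\bigl(-2T(\tfrac12-\eta)^2\bigr)$, which is exactly \eqref{eq:majvote}.

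The main obstacle is confirming that \cref{ass:binary} delivers the two hypotheses this argument needs. The first is $\eta<\tfrac12$, which is required for the deviation $t$ to be positive; without it the bound is vacuous but still true, since its right-hand side is at most $1$. The second is that the calls are conditionally independent given the audit unit. If the assumption states only a marginal error rate, I will state independence explicitly, as the lemma's phrase ``$T$ independent calls'' indicates. I will also note that the bound is uniform over audit units, so it transfers to the per-unit rate that enters \cref{thm:plateau}.
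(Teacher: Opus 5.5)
Your proposal is correct and follows essentially the same route as the paper's proof: error indicators for the $T$ independent calls, the observation that the majority can err only if at least half the calls err, and one-sided Hoeffding with deviation $T(\tfrac12-\eta)$. Counting ties as errors is slightly more careful than the paper's ``strictly more than $T/2$''. The reduction to mean exactly $\eta$ is unnecessary, since $\mathbb{E}S\le T\eta$ already gives $\Pr[S\ge T/2]\le\Pr[S-\mathbb{E}S\ge T(\tfrac12-\eta)]$.

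One aside is wrong, though harmless because the assumption imposes $\eta<\tfrac12$. For $\eta>\tfrac12$ the right-hand side is strictly below $1$, while the majority-vote error of a judge erring with probability $\eta$ tends to $1$. So the bound is false there rather than vacuous; it is vacuous only at $\eta=\tfrac12$.
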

\end{thmframe}

Substituting \cref{eq:majvote} into the plateau bound yields
\begin{equation}
1-\liminf_{i\to\infty}\mathbb{E}[\mu^\star(\widehat{P}_i)] \;\lesssim\; \frac{\eta_{\mathrm{eff}}}{\lambda}.
\label{eq:plateau-eff}
\end{equation}
In the zero-noise limit, $\mathbb{E}[\mu^\star(\widehat{P}_i)]\to 1$ certifies that the refinement converges to objects that pass every audit-sufficient probe of \cref{ass:complete}.

\begin{table}[!t]
\centering
\small
\caption{\textbf{Backbones and Judge Model.} Seven formalization backbones and one fixed judge. Llama and DeepSeek identifiers are abbreviated (e.g.\ \texttt{Llama-3.3-70B}); other names match the released identifiers.}
\label{tab:backbones}
\setlength{\tabcolsep}{4pt}
\begin{tabular}{@{}lll@{}}
\toprule
\textbf{Model} & \textbf{Family} & \textbf{Tier (Architecture)} \\
\midrule
\multicolumn{3}{l}{\textit{Formalization Backbones}} \\
GPT-5.4         & GPT      & Frontier (Proprietary) \\
DeepSeek-V4     & DeepSeek & Frontier MoE \\
DeepSeek-V3.1   & DeepSeek & Frontier MoE \\
Llama-4-17B     & Llama    & Mid-Tier MoE \\
Llama-3.3-70B   & Llama    & Compact Dense \\
Llama-3.1-8B    & Llama    & Compact Dense \\
Qwen-3.5-9B     & Qwen     & Compact Dense \\
\midrule
\multicolumn{3}{l}{\textit{Judge Model}} \\
GPT-5.4-mini    & GPT      & Mid-Tier (Proprietary) \\
\bottomrule
\end{tabular}
\end{table}

\begin{table*}[!t]
\centering
\caption{\textbf{Per-Backbone Pass Rate (\%): Baseline / Ours.} Results are shown in Mean $\pm$ std across seeds. $\uparrow$: ours exceeds baseline. Bold marks the higher mean in each cell. Cross-backbone $\mu^\star$ aggregate is in \cref{app:h1_aggregates}.}
\label{tab:h1_per_backbone}
\small
\setlength{\tabcolsep}{3pt}
\begin{tabular}{@{}lcccc@{}}
\toprule
\textbf{Backbone} & \textbf{miniF2F}\,$\uparrow$ & \textbf{ProofNet}\,$\uparrow$ & \textbf{e-SNLI}\,$\uparrow$ & \textbf{ProntoQA}\,$\uparrow$ \\
\midrule
GPT-5.4        & 91.4{\tiny$\pm$1.1}\,/\,\textbf{97.7}{\tiny$\pm$0.2}$\uparrow$ & 68.1{\tiny$\pm$3.9}\,/\,\textbf{88.5}{\tiny$\pm$2.0}$\uparrow$ & 98.0{\tiny$\pm$1.0}\,/\,\textbf{99.7}{\tiny$\pm$0.6}$\uparrow$ & \textbf{100.0}{\tiny$\pm$0.0}\,/\,\textbf{100.0}{\tiny$\pm$0.0} \\
DeepSeek-V4    & 83.6{\tiny$\pm$0.4}\,/\,\textbf{91.5}{\tiny$\pm$0.2}$\uparrow$ & 49.8{\tiny$\pm$2.7}\,/\,\textbf{70.5}{\tiny$\pm$3.3}$\uparrow$ & 78.3{\tiny$\pm$1.1}\,/\,\textbf{100.0}{\tiny$\pm$0.0}$\uparrow$ & \textbf{100.0}{\tiny$\pm$0.0}\,/\,\textbf{100.0}{\tiny$\pm$0.0} \\
DeepSeek-V3.1  & 79.9{\tiny$\pm$2.1}\,/\,\textbf{88.5}{\tiny$\pm$2.6}$\uparrow$ & 53.9{\tiny$\pm$0.9}\,/\,\textbf{69.2}{\tiny$\pm$1.4}$\uparrow$ & 75.3{\tiny$\pm$2.5}\,/\,\textbf{100.0}{\tiny$\pm$0.0}$\uparrow$ & \textbf{100.0}{\tiny$\pm$0.0}\,/\,\textbf{100.0}{\tiny$\pm$0.0} \\
\midrule
Llama-4-17B    & 70.9{\tiny$\pm$1.6}\,/\,\textbf{84.3}{\tiny$\pm$1.7}$\uparrow$ & 39.9{\tiny$\pm$2.2}\,/\,\textbf{59.5}{\tiny$\pm$4.6}$\uparrow$ & 69.0{\tiny$\pm$1.0}\,/\,\textbf{99.0}{\tiny$\pm$1.0}$\uparrow$ & 96.3{\tiny$\pm$0.3}\,/\,\textbf{100.0}{\tiny$\pm$0.0}$\uparrow$ \\
Llama-3.3-70B   & 67.6{\tiny$\pm$1.4}\,/\,\textbf{73.4}{\tiny$\pm$2.2}$\uparrow$ & 8.1{\tiny$\pm$1.6}\,/\,\textbf{10.4}{\tiny$\pm$2.2}$\uparrow$ & 62.7{\tiny$\pm$4.0}\,/\,\textbf{83.0}{\tiny$\pm$3.6}$\uparrow$ & 89.2{\tiny$\pm$1.3}\,/\,\textbf{98.3}{\tiny$\pm$1.1}$\uparrow$ \\
Llama-3.1-8B    & 57.0{\tiny$\pm$2.4}\,/\,\textbf{61.6}{\tiny$\pm$3.4}$\uparrow$ & 12.1{\tiny$\pm$1.4}\,/\,\textbf{14.0}{\tiny$\pm$2.9}$\uparrow$ & 72.7{\tiny$\pm$2.1}\,/\,\textbf{83.7}{\tiny$\pm$3.5}$\uparrow$ & 97.0{\tiny$\pm$1.0}\,/\,\textbf{99.2}{\tiny$\pm$1.0}$\uparrow$ \\
Qwen-3.5-9B    & 82.4{\tiny$\pm$2.3}\,/\,\textbf{91.4}{\tiny$\pm$1.4}$\uparrow$ & 57.1{\tiny$\pm$4.0}\,/\,\textbf{65.6}{\tiny$\pm$1.3}$\uparrow$ & 77.3{\tiny$\pm$5.7}\,/\,\textbf{79.7}{\tiny$\pm$2.3}$\uparrow$ & 97.3{\tiny$\pm$0.8}\,/\,\textbf{98.7}{\tiny$\pm$0.3}$\uparrow$ \\
\bottomrule
\end{tabular}
\end{table*}

\begin{table}[!t]
\centering
\caption{\textbf{Judge-Capacity Ablation.} Per-benchmark rows. Results are shown in Mean $\pm$ std across seeds; $\mu^\star\pm$ is cross-fixture std. Bold: column maximum.}
\label{tab:e2b_results}
\small
\setlength{\tabcolsep}{3pt}
% tab:e2b_results: E2b_ablation_judge_capacity, per (data × judge) (3-seed mean±σ).
% μ*_0 / μ*_∞: per-fixture-and-seed mean ± cross-fixture σ (audit-unit average over 8 axes).
% Pass Rate: 3-seed mean±σ (seeds 42/123/2024), reported in % (column header).
% GPT-5.4-mini rows sourced from E1 ours-default cells (same default judge as E1).
% \label{tab:e2b_results}
\begin{tabular}{@{}llccc@{}}
\toprule
\textbf{Data} & \textbf{Judge} & $\boldsymbol{\mu^\star_0}\,\uparrow$ & $\boldsymbol{\mu^\star_\infty}\,\uparrow$ & \textbf{Pass Rate}\,$\uparrow$ \\
\midrule
\multirow{3}{*}{ProofNet}
  & Llama-3.3-70B & \textbf{0.90}{\tiny$\pm$0.10} & 0.91{\tiny$\pm$0.13} & 65.9{\tiny$\pm$5.3} \\
  & GPT-5.4       & 0.86{\tiny$\pm$0.13} & 0.88{\tiny$\pm$0.16} & 66.5{\tiny$\pm$2.0} \\
  & GPT-5.4-mini  & \textbf{0.90}{\tiny$\pm$0.10} & \textbf{0.92}{\tiny$\pm$0.13} & \textbf{69.2}{\tiny$\pm$1.4} \\
\midrule
\multirow{3}{*}{ProntoQA}
  & Llama-3.3-70B & 0.93{\tiny$\pm$0.07} & 0.98{\tiny$\pm$0.06} & \textbf{100.0}{\tiny$\pm$0.0} \\
  & GPT-5.4       & 0.93{\tiny$\pm$0.07} & 0.99{\tiny$\pm$0.04} & \textbf{100.0}{\tiny$\pm$0.0} \\
  & GPT-5.4-mini  & \textbf{1.00}{\tiny$\pm$0.03} & \textbf{1.00}{\tiny$\pm$0.00} & \textbf{100.0}{\tiny$\pm$0.0} \\
\bottomrule
\end{tabular}

\end{table}

\begin{table}[!t]
\centering
\caption{\textbf{Locality Ablation.} Judge sees only the local module (local) vs.\ the whole candidate (global, $L_J{=}L_{\mathrm{AF}}$). Results are shown in Mean $\pm$ std across seeds; $\mu^\star\pm$ is cross-fixture std. Bold: column maximum.}
\label{tab:e3_results}
\small
\setlength{\tabcolsep}{3pt}
% tab:e3_results: E3_ablation_judge_locality (3-seed mean±σ, return-policy aggregation).
% μ*_0 / μ*_∞: per-fixture-and-seed mean ± cross-fixture σ (audit-unit average over 8 axes).
% Pass Rate: 3-seed mean±σ (seeds 42/123/2024), reported in % (column header).
% \label{tab:e3_results}
\begin{tabular}{@{}llccc@{}}
\toprule
\textbf{Data} & \textbf{Judge Scope} & $\boldsymbol{\mu^\star_0}\,\uparrow$ & $\boldsymbol{\mu^\star_\infty}\,\uparrow$ & \textbf{Pass Rate}\,$\uparrow$ \\
\midrule
\multirow{2}{*}{ProofNet}
  & Global & 0.90{\tiny$\pm$0.10} & \textbf{0.92}{\tiny$\pm$0.12} & 65.9{\tiny$\pm$0.6} \\
  & Local  & \textbf{0.91}{\tiny$\pm$0.10} & \textbf{0.92}{\tiny$\pm$0.12} & \textbf{66.1}{\tiny$\pm$3.1} \\
\midrule
\multirow{2}{*}{ProntoQA}
  & Global & 0.99{\tiny$\pm$0.02} & \textbf{1.00}{\tiny$\pm$0.00} & \textbf{100.0}{\tiny$\pm$0.0} \\
  & Local  & 0.99{\tiny$\pm$0.03} & \textbf{1.00}{\tiny$\pm$0.02} & \textbf{100.0}{\tiny$\pm$0.0} \\
\bottomrule
\end{tabular}

\end{table}

% =====================================================
\section{Empirical Evaluation}\label{sec:exp}
% =====================================================

\subsection{Experimental Setup}\label{ssec:exp-setup}

\paragraph{Benchmarks and provers.}
We evaluate on two formal-proof benchmarks, miniF2F~\citep{zheng2021minif2f} and ProofNet~\citep{azerbayev2023proofnet}, and two natural-language-IR benchmarks, e-SNLI~\citep{camburu2018esnli} with a typed neo-Davidsonian IR and ProntoQA~\citep{saparov2023prontoqa} with FOL. miniF2F and ProofNet are formalized in Lean~4~\citep{moura2021lean4}, whose kernel-level elaboration decides the kernel-verified coordinates of $q$ ($\xi_{\mathrm{glb}}, \xi_{\mathrm{doc}}, \xi_{\mathrm{syn}}, \xi_{\mathrm{typ}}, \xi_{\mathrm{bnd}}$). e-SNLI and ProntoQA are formalized in Isabelle/HOL~\citep{nipkow2002isabelle}, whose higher-order logic kernel together with the bidirectional ATP oracle certifies the cross-domain coordinates ($\xi_{\mathrm{gnd}}, \xi_{\mathrm{sp}}, \xi_{\mathrm{cov}}$) on the IR side. Sample sizes, splits, and per-dataset sampling protocols are in \cref{app:dataset-details}.

\paragraph{Formalization and judge backbones.}
Seven formalization backbones span four families and three scale tiers: a frontier API tier (GPT-5.4, DeepSeek-V4, DeepSeek-V3.1), a mid-tier MoE point (Llama-4-17B), and three compact-dense open-weight models (Llama-3.3-70B, Llama-3.1-8B, Qwen-3.5-9B); see \cref{tab:backbones} for full upstream identifiers. The judge is fixed to GPT-5.4-mini in all experiments except the judge-capacity ablation of \cref{ssec:h2}, with low-temperature decoding ($\le 0.2$) and the same per-axis prompt across all experiments.

\paragraph{Metrics.}
We report three metrics in each results table. $\boldsymbol{\mu^\star_0}$ and $\boldsymbol{\mu^\star_\infty}$ are the audit-unit pass rate at the initial and final iterates $\widehat{P}_0$ and $\widehat{P}_\infty$. \textbf{Pass Rate} is the fraction of instances for which the prover accepts $\widehat{P}_\infty$.
Every setting is run with three random seeds; table captions specify which standard deviation is reported in each column.

\paragraph{Baseline.}
Following prior autoformalization work \citep{wu2022autoformalization,jiang2023draft,zhang2026monotonic,quan2025faithful}, we use a single in-context learning prompt with three demonstrations per setting (\cref{app:baseline-direct}) as the baseline, keeping the backbone, prompt, and judge model identical to those used by the refinement loop. This matched-setup comparison is preferred over cross-paper absolute pass rates, which are not directly comparable when each system targets a different proof assistant on non-overlapping splits.

\begin{figure*}[!t]
\centering
\begin{subfigure}[t]{0.245\textwidth}\centering\includegraphics[width=\linewidth]{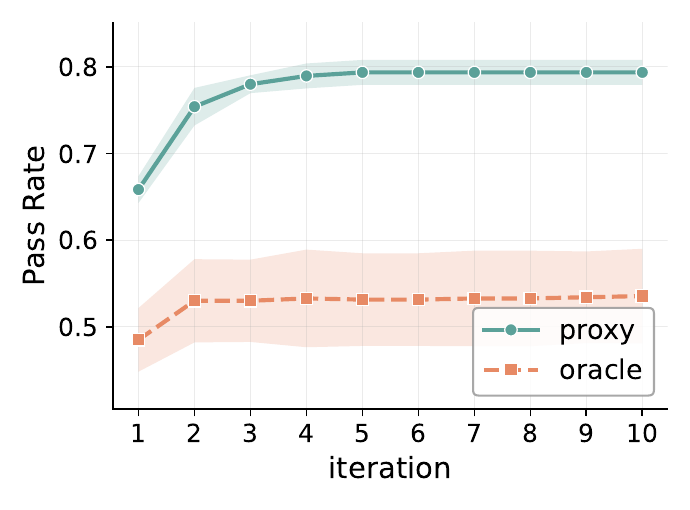}\caption{DS-V3.1 / miniF2F}\end{subfigure}\hfill
\begin{subfigure}[t]{0.245\textwidth}\centering\includegraphics[width=\linewidth]{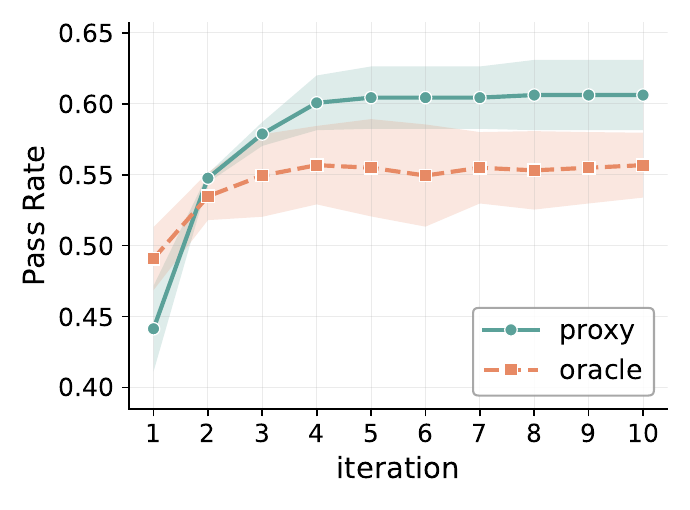}\caption{DS-V3.1 / ProofNet}\end{subfigure}\hfill
\begin{subfigure}[t]{0.245\textwidth}\centering\includegraphics[width=\linewidth]{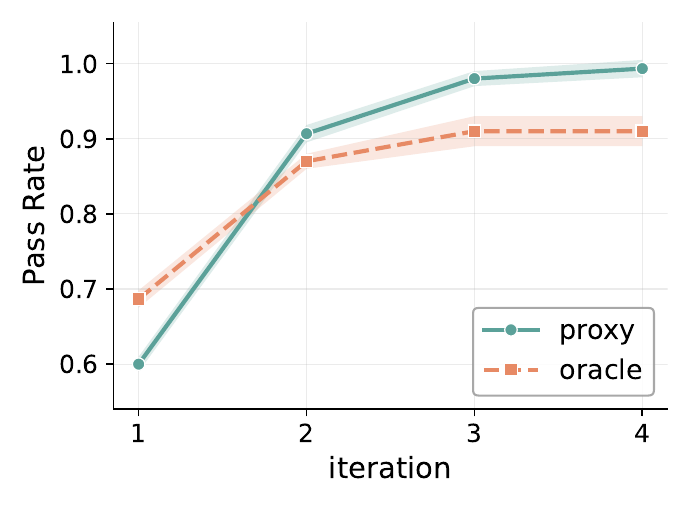}\caption{DS-V3.1 / e-SNLI}\end{subfigure}\hfill
\begin{subfigure}[t]{0.245\textwidth}\centering\includegraphics[width=\linewidth]{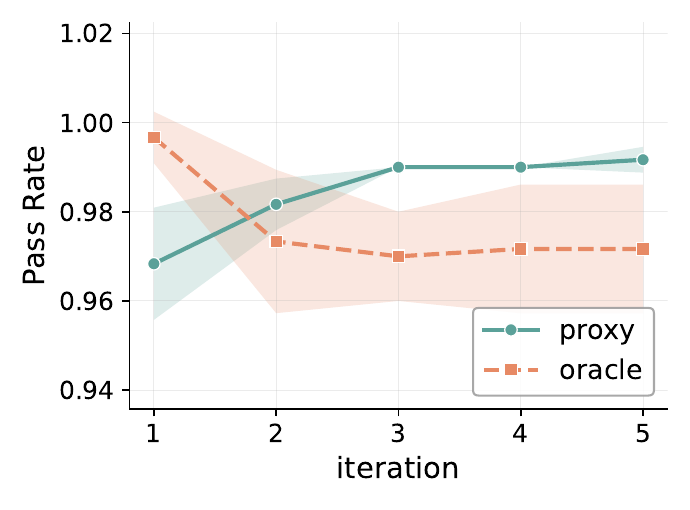}\caption{DS-V3.1 / ProntoQA}\end{subfigure}

\vspace{4pt}
\begin{subfigure}[t]{0.245\textwidth}\centering\includegraphics[width=\linewidth]{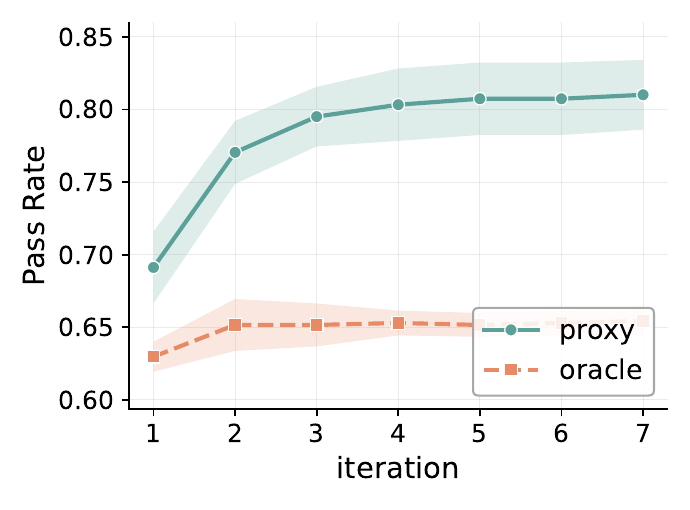}\caption{Qwen-9B / miniF2F}\end{subfigure}\hfill
\begin{subfigure}[t]{0.245\textwidth}\centering\includegraphics[width=\linewidth]{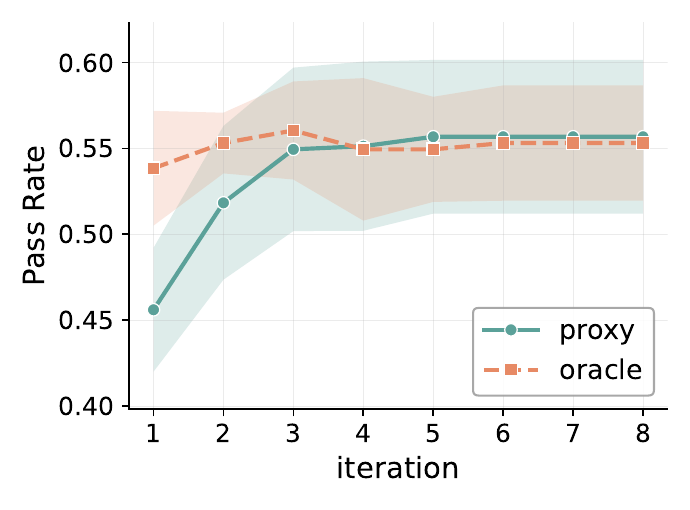}\caption{Qwen-9B / ProofNet}\end{subfigure}\hfill
\begin{subfigure}[t]{0.245\textwidth}\centering\includegraphics[width=\linewidth]{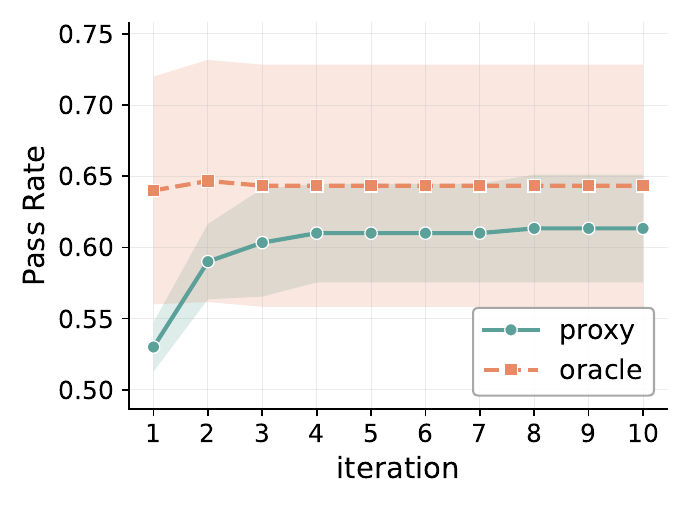}\caption{Qwen-9B / e-SNLI}\end{subfigure}\hfill
\begin{subfigure}[t]{0.245\textwidth}\centering\includegraphics[width=\linewidth]{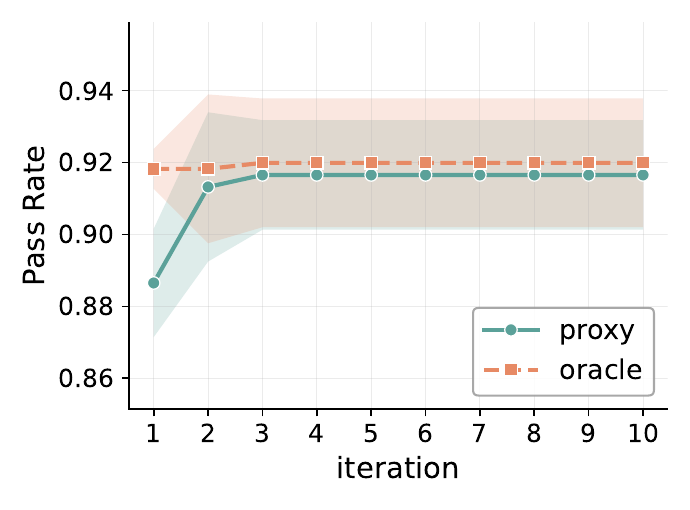}\caption{Qwen-9B / ProntoQA}\end{subfigure}
\caption{\textbf{Refinement Trajectories.} Each panel shows the proxy $\widehat\mu$ (solid) and the external oracle pass rate (dashed) per iteration, on DeepSeek-V3.1 (top) and Qwen-3.5-9B (bottom); the band is the mean $\pm$ std across three seeds. The $y$-axis range differs per panel, so values are not comparable across panels.}
\label{fig:h1_trajectory_alignment}
\end{figure*}

\begin{table*}[!t]
\centering
\caption{\textbf{Per-Axis vs.\ Scalar Verdict.} Pass Rate (\%). Rows above and below the midrule differ only in the verdict format: per-axis vector (decomposed) or single end-to-end score (scalar). Results are shown in Mean $\pm$ std across seeds. Bold: column maximum.}
\label{tab:e4_results}
\small
\setlength{\tabcolsep}{4pt}
% tab:e4_results: E4_ablation_feedback_shape extended to all 4 substrates via P1_decomposed_cross_substrate.
% All cells report 3-seed mean$\pm\sigma$ from P1_E4_multiseed (ProofNet, ProntoQA) and
% P1_decomposed_cross_substrate_multiseed (miniF2F, e-SNLI), seeds 42 / 123 / 2024.
% \label{tab:e4_results}
\begin{tabular}{@{}llcccc@{}}
\toprule
\textbf{Method} & \textbf{Judge} & \textbf{miniF2F}\,$\uparrow$ & \textbf{ProofNet}\,$\uparrow$ & \textbf{e-SNLI}\,$\uparrow$ & \textbf{ProntoQA}\,$\uparrow$ \\
\midrule
\multirow{2}{*}{Scalar}
  & GPT-5.4      & 84.7{\tiny$\pm$0.2} & 58.8{\tiny$\pm$2.4} & 81.0{\tiny$\pm$2.7} & \textbf{100.0}{\tiny$\pm$0.0} \\
  & GPT-5.4-mini & 86.9{\tiny$\pm$0.4} & 58.4{\tiny$\pm$1.1} & 82.0{\tiny$\pm$1.0} & \textbf{100.0}{\tiny$\pm$0.0} \\
\midrule
\multirow{2}{*}{Decomposed (Ours)}
  & GPT-5.4      & 86.1{\tiny$\pm$1.6} & \textbf{67.0}{\tiny$\pm$1.1} & \textbf{98.0}{\tiny$\pm$1.0} & \textbf{100.0}{\tiny$\pm$0.0} \\
  & GPT-5.4-mini & \textbf{88.5}{\tiny$\pm$0.7} & \textbf{66.8}{\tiny$\pm$1.7} & \textbf{100.0}{\tiny$\pm$0.0} & \textbf{100.0}{\tiny$\pm$0.0} \\
\bottomrule
\end{tabular}

\end{table*}

\subsection{Main Results}\label{ssec:h1}

Per-backbone Pass Rate across the seven formalization backbones and four datasets is shown in \cref{tab:h1_per_backbone}. We can see that:

\paragraph{Our refinement framework improves Pass Rate across the seven backbones and four datasets.}
Across the seven backbones and four datasets, our refinement framework exceeds the single-shot ICL baseline on most settings. The frontier three backbones gain $+19\%$ on ProofNet and $+16\%$ on e-SNLI on average. These gains are an order of magnitude above the per-row seed std we observe in \cref{tab:h1_per_backbone}, so the improvement is not a within-seed fluctuation but a stable shift of the candidate that the proposer alone cannot deliver.

\paragraph{Refinement gain is inversely correlated with baseline Pass Rate, irrespective of whether the benchmark is math or explanation.}
ProofNet (frontier baseline $50$--$68\%$) gains $15$--$21\%$; e-SNLI (baseline $75$--$98\%$) gains $2$--$25\%$; miniF2F (baseline $80$--$91\%$) gains only $6$--$9\%$; ProntoQA stays at the dataset ceiling. The largest gains land on the two lowest-baseline benchmarks, one math (ProofNet) and one explanation (e-SNLI); the math benchmark with the higher baseline (miniF2F) yields less than half the gain of ProofNet. The size of the refinement gain therefore tracks how much headroom the baseline leaves rather than which family the benchmark belongs to.

\paragraph{Refinement helps every backbone, but absolute Pass Rates scale with backbone capacity.}
The frontier three backbones reach $69$--$89\%$ on ProofNet, $\ge 99.7\%$ on e-SNLI, $100\%$ on ProntoQA, and $88$--$98\%$ on miniF2F, whereas the two compact-dense Llamas remain at $\le 14\%$ on ProofNet even after refinement. The same compact-dense backbones nonetheless gain $+5$--$6\%$ on miniF2F, $+2$--$9\%$ on ProntoQA, and $+11$--$20\%$ on e-SNLI. Backbone capacity therefore bounds the absolute Pass Rate, but the gain from refinement does not vanish at smaller scales.

\subsection{Judge Calibration}\label{ssec:judge-cal}

Because autoformalization lacks a scalable gold-standard corpus at the sizes used here, reference-free evaluation through LLM-judge proxies is the established practice in the closest concurrent work \citep{zhang2025beyond,zhang2026monotonic,quan2025faithful}. To check how well our proxy reflects the true correctness signal, we construct an approximate stand-in for the gold label, which we call the \emph{oracle}, and compare its per-iteration pass rate against the proxy score $\widehat\mu$ on two representative backbones (DeepSeek-V3.1, Qwen-3.5-9B). The oracle is chosen per benchmark to match what each prover can decide: on the Lean math benchmarks (miniF2F, ProofNet) it is a GPT-5.4 judge that compares the candidate against the gold Lean reference, and on the Isabelle natural-language benchmarks (e-SNLI, ProntoQA) it queries the candidate's axiomatization with the ATP in both directions and checks the verdict against the dataset label. We can see from \cref{fig:h1_trajectory_alignment} that:

\paragraph{The proxy trajectory follows the same per-iteration shape as the external oracle, even though they measure quality on different scales.}
On every panel of \cref{fig:h1_trajectory_alignment}, the proxy $\widehat\mu$ and the oracle pass rate trace qualitatively the same per-iteration curve. Both rise sharply over the first few iterations and flatten by iteration $3$--$5$, reproducing the geometric approach to a plateau predicted by \cref{thm:plateau}, with the aggregate proxy plateau at $0.89$--$0.99$ across math and explanation (\cref{tab:e1_results}). The two curves nonetheless differ in absolute level because the proxy and the oracle score quality on incompatible scales, so the $y$-axis offset between the solid and dashed curves reflects this scale mismatch rather than the noise floor $\eta$.

\paragraph{Where the proxy and oracle curves disagree, the proxy gives credit for repairs the oracle does not.}
On Qwen-3.5-9B's ProntoQA panel the oracle stays flat while the proxy continues to climb. The proxy credits any per-axis repair its judge accepts, including surface-level fixes that tighten the axiomatization, whereas the oracle only changes when the ATP's verdict flips on the dataset label. Seed-band and judge-noise diagnostics are reported in \cref{app:judge-cal-extended}.

\subsection{Judge Capacity}\label{ssec:h2}

Fixing DeepSeek-V3.1 as the formalizer, we run the loop under three judge models that span a wide capacity range: the open-weight Llama-3.3-70B, the mid-sized GPT-5.4-mini default, and the frontier GPT-5.4, as shown in \cref{tab:e2b_results}. The same DeepSeek-V3.1 formalizer carries through \cref{ssec:h3,ssec:h4}. We can see that:

\paragraph{Swapping the judge for a stronger or weaker model changes the final Pass Rate by much less than swapping the dataset does.}
On ProofNet, the three judges span only $3.3\%$ (Llama-3.3-70B $65.9\pm 5.3\%$, GPT-5.4-mini $69.2\pm 1.4\%$, GPT-5.4 $66.5\pm 2.0\%$). The cross-dataset spread on the same DeepSeek-V3.1 backbone is $\approx 19$--$31\%$, an order of magnitude larger, so a mid-sized judge already exhausts the available signal under audit-unit feedback.

\subsection{Effect of Judge Context Size}\label{ssec:h3}

We contrast the default per-axis judge (which sees only the local audit unit) against a context-expanded variant that receives the entire elicited object $\widehat{P}$ as $L_J{=}L_{\mathrm{AF}}$, as shown in \cref{tab:e3_results}. We can see that:

\paragraph{Letting the judge see the entire formal candidate does not help compared to letting it see only the local module being scored.}
ProntoQA reaches $100\%$ under both views, and on ProofNet the two are statistically tied ($65.9\pm 0.6\%$ global vs.\ $66.1\pm 3.1\%$ local). The local module already supplies what a global view would add.

\subsection{Decomposed versus Scalar Feedback}\label{ssec:h4}

Holding the loop fixed, we vary only the verdict format---per-axis audit-unit vector (ours) versus a single end-to-end scalar with a natural-language rationale---and cross it with judge capacity (GPT-5.4 vs.\ GPT-5.4-mini) on the four datasets.

\paragraph{A per-axis verdict outperforms a single end-to-end score wherever the baseline Pass Rate is below saturation, and matches it where the baseline already saturates.}
On the two datasets where the baseline is below saturation the per-axis proxy beats the scalar proxy on both judge tiers (\cref{tab:e4_results}): ProofNet sees an $8.2$--$8.4\%$ gap and e-SNLI sees a $17$--$18\%$ gap. ProntoQA reaches $100\%$ under both, and miniF2F (baseline $\ge 84\%$) shows only a $1.4$--$1.6\%$ gap within seed variance. The split tracks headroom rather than benchmark family.

% =====================================================
\section{Conclusion}\label{sec:conclusion}
% =====================================================
We recast autoformalization evaluation as constraint-directed optimization over a per-axis vector of verifiable property checks. Across seven backbones and four datasets the resulting refinement loop consistently improves Pass Rate, and ablations identify the per-axis proxy as the source of this advantage.

% =====================================================
\section*{Limitations}
% =====================================================
Our notion of correctness is audit-sufficient rather than fully semantic. The calibration in \cref{ssec:judge-cal} nonetheless shows that the audit-unit score $\widehat\mu$ and an independently constructed external oracle move together in per-iteration shape on every panel, so audit-sufficient acceptance and semantic correctness are empirically correlated even though they are not formally identified. Formalizing this correlation through paired-target oracles, on benchmarks where such oracles can be sourced at scale, is left to future work.

Second, the empirical advantage of per-axis decomposition is currently dataset-conditional. We see clear effects on Lean ProofNet and Isabelle e-SNLI, where the baseline Pass Rate is below saturation and the kernel admits multiple violation modes, but the gap collapses on Lean miniF2F and Isabelle ProntoQA where the baseline already saturates. Whether the audit-unit construction generalizes as a dataset-agnostic refinement signal will require evaluation on benchmarks with broader topic coverage and iter-$0$ pools that are below saturation, particularly outside the four-dataset slice we used here.

Third, the framework as presented is purely an inference-time refinement procedure. The per-axis verdict vector is in principle a richer training signal than a scalar reward, and integrating it into reinforcement-learning fine-tuning of either the proposer or the judge, so as to close the loop between calibration and policy improvement, is a natural next step that we have not pursued in this work. Beyond autoformalization, we expect the same audit-unit decomposition to apply wherever correctness can be approximated by a calibrated family of property checks, including program synthesis and multi-hop natural-language reasoning.

% =====================================================
\section*{Acknowledgments}
% =====================================================
This work was partially funded by the Swiss National Science Foundation (SNSF) projects RATIONAL and M-RATIONAL.

\bibliography{references}

@inproceedings{wu2022autoformalization,
  title     = {Autoformalization with Large Language Models},
  author    = {Wu, Yuhuai and Jiang, Albert Q. and Li, Wenda and Rabe, Markus N. and Staats, Charles and Jamnik, Mateja and Szegedy, Christian},
  booktitle = {Advances in Neural Information Processing Systems},
  year      = {2022},
  url       = {https://proceedings.neurips.cc/paper_files/paper/2022/file/d0c6bc641a56bebee9d985b937307367-Paper-Conference.pdf}
}

@article{polu2020generative,
  title   = {Generative Language Modeling for Automated Theorem Proving},
  author  = {Polu, Stanislas and Sutskever, Ilya},
  journal = {arXiv preprint arXiv:2009.03393},
  year    = {2020},
  url     = {https://arxiv.org/abs/2009.03393}
}

@inproceedings{zheng2021minif2f,
  title     = {{MiniF2F}: A Cross-System Benchmark for Formal Olympiad-Level Mathematics},
  author    = {Zheng, Kunhao and Han, Jesse Michael and Polu, Stanislas},
  booktitle = {International Conference on Learning Representations (ICLR)},
  year      = {2022},
  url       = {https://arxiv.org/abs/2109.00110}
}

@inproceedings{yang2023leandojo,
  title     = {{LeanDojo}: Theorem Proving with Retrieval-Augmented Language Models},
  author    = {Yang, Kaiyu and Swope, Aidan M. and Gu, Alex and Chalamala, Rahul and Song, Peiyang and Yu, Shixing and Godil, Saad and Prenger, Ryan and Anandkumar, Anima},
  booktitle = {Advances in Neural Information Processing Systems (Datasets and Benchmarks Track)},
  year      = {2023},
  url       = {https://arxiv.org/abs/2306.15626}
}

@inproceedings{madaan2023selfrefine,
  title     = {Self-Refine: Iterative Refinement with Self-Feedback},
  author    = {Madaan, Aman and Tandon, Niket and Gupta, Prakhar and Hallinan, Skyler and Gao, Luyu and Wiegreffe, Sarah and Alon, Uri and Dziri, Nouha and Prabhumoye, Shrimai and Yang, Yiming and Gupta, Shashank and Majumder, Bodhisattwa Prasad and Hermann, Katherine and Welleck, Sean and Yazdanbakhsh, Amir and Clark, Peter},
  booktitle = {Advances in Neural Information Processing Systems},
  year      = {2023},
  url       = {https://arxiv.org/abs/2303.17651}
}

@inproceedings{shinn2023reflexion,
  title     = {Reflexion: Language Agents with Verbal Reinforcement Learning},
  author    = {Shinn, Noah and Cassano, Federico and Berman, Edward and Gopinath, Ashwin and Narasimhan, Karthik and Yao, Shunyu},
  booktitle = {Advances in Neural Information Processing Systems},
  year      = {2023},
  url       = {https://arxiv.org/abs/2303.11366}
}

@inproceedings{yao2023tot,
  title     = {Tree of Thoughts: Deliberate Problem Solving with Large Language Models},
  author    = {Yao, Shunyu and Yu, Dian and Zhao, Jeffrey and Shafran, Izhak and Griffiths, Thomas L. and Cao, Yuan and Narasimhan, Karthik},
  booktitle = {Advances in Neural Information Processing Systems},
  year      = {2023},
  url       = {https://arxiv.org/abs/2305.10601}
}

@inproceedings{ouyang2022training,
  title     = {Training Language Models to Follow Instructions with Human Feedback},
  author    = {Ouyang, Long and Wu, Jeff and Jiang, Xu and Almeida, Diogo and Wainwright, Carroll L. and Mishkin, Pamela and Zhang, Chong and Agarwal, Sandhini and Slama, Katarina and Ray, Alex and Schulman, John and Hilton, Jacob and Kelton, Fraser and Miller, Luke and Simens, Maddie and Askell, Amanda and Welinder, Peter and Christiano, Paul and Leike, Jan and Lowe, Ryan},
  booktitle = {Advances in Neural Information Processing Systems},
  year      = {2022},
  url       = {https://arxiv.org/abs/2203.02155}
}

@article{bai2022constitutional,
  title   = {Constitutional AI: Harmlessness from AI Feedback},
  author  = {Bai, Yuntao and Kadavath, Saurav and Kundu, Sandipan and Askell, Amanda and Kernion, Jackson and Jones, Andy and Chen, Anna and Goldie, Anna and Mirhoseini, Azalia and McKinnon, Cameron and Chen, Carol and Olsson, Catherine and Olah, Christopher and Hernandez, Danny and Drain, Dawn and Ganguli, Deep and Li, Dustin and Tran-Johnson, Eli and Perez, Ethan and Kerr, Jamie and Mueller, Jared and Ladish, Jeffrey and Landau, Joshua and Ndousse, Kamal and Lukosuite, Kamile and Lovitt, Liane and Sellitto, Michael and Elhage, Nelson and Schiefer, Nicholas and Mercado, Noemi and DasSarma, Nova and Lasenby, Robert and Larson, Robin and Ringer, Sam and Johnston, Scott and Kravec, Shauna and El Showk, Sheer and Fort, Stanislav and Lanham, Tamera and Telleen-Lawton, Timothy and Conerly, Tom and Henighan, Tom and Hume, Tristan and Bowman, Samuel R. and Hatfield-Dodds, Zac and Mann, Ben and Amodei, Dario and Joseph, Nicholas and McCandlish, Sam and Brown, Tom and Kaplan, Jared},
  journal = {arXiv preprint arXiv:2212.08073},
  year    = {2022},
  url     = {https://arxiv.org/abs/2212.08073}
}

@inproceedings{lightman2024lets,
  title     = {Let's Verify Step by Step},
  author    = {Lightman, Hunter and Kosaraju, Vineet and Burda, Yura and Edwards, Harri and Baker, Bowen and Lee, Teddy and Leike, Jan and Schulman, John and Sutskever, Ilya and Cobbe, Karl},
  booktitle = {International Conference on Learning Representations (ICLR)},
  year      = {2024},
  url       = {https://arxiv.org/abs/2305.20050}
}

@article{uesato2022solving,
  title   = {Solving Math Word Problems with Process- and Outcome-Based Feedback},
  author  = {Uesato, Jonathan and Kushman, Nate and Kumar, Ramana and Song, Francis and Siegel, Noah and Wang, Lisa and Creswell, Antonia and Irving, Geoffrey and Higgins, Irina},
  journal = {arXiv preprint arXiv:2211.14275},
  year    = {2022},
  url     = {https://arxiv.org/abs/2211.14275}
}

@inproceedings{wang2023selfconsistency,
  title     = {Self-Consistency Improves Chain of Thought Reasoning in Language Models},
  author    = {Wang, Xuezhi and Wei, Jason and Schuurmans, Dale and Le, Quoc and Chi, Ed H. and Narang, Sharan and Chowdhery, Aakanksha and Zhou, Denny},
  booktitle = {International Conference on Learning Representations (ICLR)},
  year      = {2023},
  url       = {https://arxiv.org/abs/2203.11171}
}

@inproceedings{jiang2023draft,
  title     = {Draft, Sketch, and Prove: Guiding Formal Theorem Provers with Informal Proofs},
  author    = {Jiang, Albert Q. and Welleck, Sean and Zhou, Jin Peng and Li, Wenda and Liu, Jiacheng and Jamnik, Mateja and Lacroix, Timoth\'ee and Wu, Yuhuai and Lample, Guillaume},
  booktitle = {International Conference on Learning Representations (ICLR)},
  year      = {2023},
  url       = {https://arxiv.org/abs/2210.12283}
}

@inproceedings{zheng2023judging,
  title     = {Judging {LLM}-as-a-Judge with {MT-Bench} and Chatbot Arena},
  author    = {Zheng, Lianmin and Chiang, Wei-Lin and Sheng, Ying and Zhuang, Siyuan and Wu, Zhanghao and Zhuang, Yonghao and Lin, Zi and Li, Zhuohan and Li, Dacheng and Xing, Eric P. and Zhang, Hao and Gonzalez, Joseph E. and Stoica, Ion},
  booktitle = {Advances in Neural Information Processing Systems (Datasets and Benchmarks Track)},
  year      = {2023},
  url       = {https://arxiv.org/abs/2306.05685}
}

@article{azerbayev2023proofnet,
  title   = {ProofNet: Autoformalizing and Formally Proving Undergraduate-Level Mathematics},
  author  = {Azerbayev, Zhangir and Piotrowski, Bartosz and Schoelkopf, Hailey and Ayers, Edward W. and Radev, Dragomir and Avigad, Jeremy},
  journal = {arXiv preprint arXiv:2302.12433},
  year    = {2023},
  url     = {https://arxiv.org/abs/2302.12433}
}

@article{zhang2025beyond,
  title   = {Beyond Gold Standards: Epistemic Ensemble of {LLM} Judges for Formal Mathematical Reasoning},
  author  = {Zhang, Lan and Valentino, Marco and Freitas, Andr\'e},
  journal = {arXiv preprint arXiv:2506.10903},
  year    = {2025},
  url     = {https://arxiv.org/abs/2506.10903}
}

@article{bottou2018optimization,
  title   = {Optimization Methods for Large-Scale Machine Learning},
  author  = {Bottou, L{\'e}on and Curtis, Frank E. and Nocedal, Jorge},
  journal = {SIAM Review},
  volume  = {60},
  number  = {2},
  pages   = {223--311},
  year    = {2018}
}

@inproceedings{camburu2018esnli,
  title     = {e-{SNLI}: Natural Language Inference with Natural Language Explanations},
  author    = {Camburu, Oana-Maria and Rockt{\"{a}}schel, Tim and Lukasiewicz, Thomas and Blunsom, Phil},
  booktitle = {Advances in Neural Information Processing Systems},
  volume    = {31},
  year      = {2018}
}

@inproceedings{saparov2023prontoqa,
  title     = {Language Models Are Greedy Reasoners: A Systematic Formal Analysis of Chain-of-Thought},
  author    = {Saparov, Abulhair and He, He},
  booktitle = {International Conference on Learning Representations (ICLR)},
  year      = {2023},
  url       = {https://arxiv.org/abs/2210.01240}
}

@inproceedings{valentino2021nli,
  title     = {Do Natural Language Explanations Represent Valid Logical Arguments? Verifying Entailment in Explainable {NLI} Gold Standards},
  author    = {Valentino, Marco and Pratt-Hartmann, Ian and Freitas, Andr{\'e}},
  booktitle = {Proceedings of the 14th International Conference on Computational Semantics (IWCS)},
  pages     = {76--86},
  year      = {2021},
  url       = {https://aclanthology.org/2021.iwcs-1.8/},
}

@article{zhang2026monotonic,
  title     = {Monotonic Reference-Free Refinement for Autoformalization},
  author    = {Zhang, Lan and Valentino, Marco and Freitas, Andr{\'e}},
  journal   = {arXiv preprint arXiv:2601.23166},
  year      = {2026},
  url       = {https://arxiv.org/abs/2601.23166}
}

@article{quan2025faithful,
  title     = {Faithful and Robust {LLM}-Driven Theorem Proving for {NLI} Explanations},
  author    = {Quan, Xin and Valentino, Marco and Dennis, Louise A. and Freitas, Andr{\'e}},
  journal   = {arXiv preprint arXiv:2505.24264},
  year      = {2025},
  url       = {https://arxiv.org/abs/2505.24264}
}

@article{llmverifier2025fourdelta,
  title     = {The {4/$\delta$} Bound: Designing Predictable {LLM}-Verifier Systems for Formal Method Guarantee},
  author    = {Dantas, Pierre and Cordeiro, Lucas and Sun, Youcheng and Junior, Waldir},
  journal   = {arXiv preprint arXiv:2512.02080},
  year      = {2025},
  url       = {https://arxiv.org/abs/2512.02080}
}

@inproceedings{moura2021lean4,
  title     = {The {Lean 4} Theorem Prover and Programming Language},
  author    = {de Moura, Leonardo and Ullrich, Sebastian},
  editor    = {Platzer, Andr{\'e} and Sutcliffe, Geoff},
  booktitle = {Automated Deduction -- CADE 28},
  pages     = {625--635},
  year      = {2021},
  publisher = {Springer International Publishing},
  address   = {Cham}
}

@book{nipkow2002isabelle,
  title     = {{Isabelle/HOL}: A Proof Assistant for Higher-Order Logic},
  author    = {Nipkow, Tobias and Paulson, Lawrence C. and Wenzel, Markus},
  series    = {Lecture Notes in Computer Science},
  volume    = {2283},
  publisher = {Springer},
  year      = {2002}
}

\appendix

% =====================================================
\section{Auxiliary Bounds}\label{app:aux-bounds}
% =====================================================

\subsection{Audit-Unit Framework}\label{sssec:audit-units}

Each elicited object $\widehat{P}=\{M_1,\ldots,M_K\}\in\mathcal{E}$ decomposes into $K\ge 1$ modules. The audit-unit definition for each property follows the scope at which the property is defined in \cref{ssec:prelim-judges}.

For the four module-scoped properties $k\in\{\mathrm{syn},\mathrm{typ},\mathrm{bnd},\mathrm{gnd}\}$, the audit units are the modules; a single judge call on module $M$ returns a binary value $\xi_k(M)\in\{0,1\}$, with latent counterpart $q_k(M)\in\{0,1\}$ understood as the strict conjunction of the underlying per-statement, per-binding, or per-symbol checks, where a single internal violation suffices to set $q_k(M)=0$. For the two whole-object axes $k\in\{\mathrm{glb},\mathrm{doc}\}$, the audit units are $\widehat{P}$ itself (a single unit per instance), with $\xi_k(\widehat{P}),q_k(\widehat{P})\in\{0,1\}$.

For statement-scoped semantic preservation $k=\mathrm{sp}$, the audit units are the explicit statements $\widehat{p}_j\in\widehat{P}$ taken across modules. For coverage $k=\mathrm{cov}$, the audit units are the source claim units $c\in C(nl)$, with $\xi_{\mathrm{cov}}(c),q_{\mathrm{cov}}(c)\in\{0,1\}$ defined on whether $c$ is represented somewhere in the current $\widehat{P}$. This batch-AND framing is the strict modular evaluation semantics already implied by \cref{ssec:prelim-judges}; the per-item view is conceptual, and both the judge interface and the noise model live at the audit-unit level corresponding to each axis's scope.

We write $u$ for a generic audit unit and $N_k(\widehat{P})$ for its count, with $N_k(\widehat{P})=K$ for the four module-scoped axes, $N_k(\widehat{P})=1$ for $k\in\{\mathrm{glb},\mathrm{doc}\}$, $N_{\mathrm{sp}}(\widehat{P})=\sum_M |M|$ for statement-scoped semantic preservation, and $N_{\mathrm{cov}}(\widehat{P})=|C(nl)|\ge 1$ for coverage. Inactive coordinates are dropped, and $w$ is renormalized over the active set. The property-level aggregates of \cref{ssec:setup} are sample means over audit units:
\[
\begin{aligned}
\xi_k(\widehat{P}) &=\tfrac{1}{N_k(\widehat{P})}\sum_{u}\xi_k(u), \\
q_k(\widehat{P})   &=\tfrac{1}{N_k(\widehat{P})}\sum_{u}q_k(u).
\end{aligned}
\]
The observable $\xi_k(\widehat{P})\in[0,1]$ is the empirically computed proxy, while $q_k(\widehat{P})\in[0,1]$ is the idealized aggregate of conceptual per-unit ground-truth labels $q_k(u)$ that the proxy approximates. The per-audit-unit binary judge model of \cref{ass:binary} (\cref{app:amplification}) propagates through this aggregation to a one-step concentration bound between the observable proxy $\widehat{\mu}$ and the latent $\mu^\star$.

\begin{lemma}[Proxy accuracy bound, one-step]\label{lem:proxy-accuracy}
Under \cref{ass:binary} with per-audit-unit misclassification rate $\eta$, write $N_{\min}(\widehat{P})=\min_{k:\,w_k>0} N_k(\widehat{P})$ for the smallest audit-unit count on $\widehat{P}$ across active axes. For any $\widehat{P}\in\mathcal{E}$ and any $\delta\in(0,1)$,
\[
\mathbb{P}\!\Bigg(|\widehat{\mu}(\widehat{P})-\mu^\star(\widehat{P})|>\eta+\sqrt{\frac{\log(2m/\delta)}{2\,N_{\min}(\widehat{P})}}\Bigg)\le \delta.
\]
\end{lemma}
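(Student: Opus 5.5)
The proof splits the error into a bias part and a fluctuation part and handles each separately. \Cref{ass:binary} is not reproduced above. I read it as follows: for each axis $k$ and audit unit $u$, the verdict $\xi_k(u)\in\{0,1\}$ disagrees with $q_k(u)$ with probability at most $\eta$, and verdicts on distinct units are independent given $\widehat{P}$. Write $e_k(u)=\xi_k(u)-q_k(u)\in\{-1,0,1\}$. Then $\xi_k(\widehat{P})-q_k(\widehat{P})=\frac{1}{N_k}\sum_u e_k(u)$, and since $\|w\|_1=1$,
\[
|\widehat{\mu}(\widehat{P})-\mu^\star(\widehat{P})|\le \max_{k:\,w_k>0}\bigl|\xi_k(\widehat{P})-q_k(\widehat{P})\bigr|.
\]
It therefore suffices to control each active coordinate at level $\eta+\epsilon$ and then take a union bound over the at most $m$ active axes.

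For a fixed axis $k$, the triangle inequality gives
\[
|\tfrac{1}{N_k}\textstyle\sum_u e_k(u)|\le |\tfrac{1}{N_k}\sum_u \mathbb{E}e_k(u)|+|\tfrac{1}{N_k}\sum_u (e_k(u)-\mathbb{E}e_k(u))|.
\]
The first term is at most $\frac{1}{N_k}\sum_u \mathbb{P}(e_k(u)\neq 0)\le\eta$. For the second term, the cleanest route is to avoid the range-$2$ variables $e_k(u)$ and work with the mistake indicators $Z_u=\mathbf{1}[\xi_k(u)\neq q_k(u)]\in\{0,1\}$. Their mean is $\bar Z$ with $\mathbb{E}\bar Z\le\eta$, and $|\xi_k-q_k|\le\bar Z$. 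The one-sided Hoeffding inequality then gives $\mathbb{P}(\bar Z>\eta+\epsilon)\le \exp(-2N_k\epsilon^2)$. This is slightly better than splitting into bias plus fluctuation, because it needs only the upper tail and leaves $2m$ room in the logarithm.

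Set $\epsilon=\sqrt{\log(2m/\delta)/(2N_{\min})}$. Since $N_k\ge N_{\min}$ for every active $k$, each axis fails with probability at most $\exp(-2N_{\min}\epsilon^2)=\delta/(2m)$. A union bound over at most $m$ axes gives total failure probability at most $\delta/2\le\delta$, which matches the stated form (the factor $2$ would come from a two-sided Hoeffding bound if one works with $e_k$ directly).

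The main obstacle is the independence structure. The audit-unit decomposition, with $N_k$ depending on $\widehat{P}$, must be fixed before the judge calls are made, and the verdicts must be conditionally independent across units. A second concern is the coverage axis $\mathrm{cov}$, where the units $C(nl)$ are judged against the whole of $\widehat{P}$. I would state explicitly that \cref{ass:binary} supplies per-unit independence conditional on $\widehat{P}$, and argue conditionally on $\widehat{P}$ throughout. If the assumption gives only a mistake rate of at most $\eta$ rather than exact independence, I would fall back on a martingale (Azuma) version of Hoeffding over the sequence of judge calls, which yields the same exponent.
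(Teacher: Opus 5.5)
Your proof is correct. It follows the same outline as the paper's: reduce via $\|w\|_1=1$ to a maximum over axes, apply Hoeffding per axis, and take a union bound. The difference is in the concentration step.

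The paper applies a two-sided Hoeffding bound directly to the verdicts $\xi_k(u)\in[0,1]$ around their means, giving $2\exp(-2N_k t^2)$. It then adds the bias bound $|\mathbb{E}\xi_k(\widehat{P})-q_k(\widehat{P})|\le\eta$ separately. The union bound costs $2m\exp(-2N_{\min}t^2)$, and $t$ is chosen to make this exactly $\delta$, so the $2$ inside $\log(2m/\delta)$ is used up by the two-sidedness.

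You instead dominate $|\xi_k(\widehat{P})-q_k(\widehat{P})|$ by the mistake fraction $\bar Z$. Its mean is at most $\eta$ and only its upper tail matters, so bias and fluctuation are handled in one one-sided Hoeffding step, with no bookkeeping over whether each $q_k(u)$ is $0$ or $1$. This gives failure probability $\delta/2$; equivalently, the bound already holds with $\log(m/\delta)$ in place of $\log(2m/\delta)$.

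Restricting the union bound to active axes is also the correct bookkeeping. The paper union-bounds over all $k\in\{1,\ldots,m\}$ while $N_{\min}$ is a minimum over active axes only. That is harmless only because inactive axes never enter $\widehat{\mu}-\mu^\star$.

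Both routes use exactly the independence across audit units of a fixed property, which the binary-judge assumption states explicitly. So the obstacle you flag is settled by hypothesis.

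Your fallback is wrong as stated, though. Azuma--Hoeffding does not follow from a marginal mistake rate $\le\eta$ alone. You would need each call's error probability to be at most $\eta$ conditionally on all earlier calls. With only marginal bounds, perfectly correlated errors give $\bar Z=1$ with probability $\eta$ regardless of $N_k$, and no concentration is possible.
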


A proof is given in \cref{app:proofs}. The bound has two levers. Amplification through repeated judging (\cref{lem:majvote}) shrinks $\eta$, and larger audit-unit counts shrink the finite-sample term. The whole-object axes ($\xi_{\mathrm{glb}}$, $\xi_{\mathrm{doc}}$) contribute a single unit and therefore depend purely on the $\eta$ lever; on these axes, with $m{=}8$ and $\delta{=}0.05$, the finite-sample term $\sqrt{\log(2m/\delta)/2}\approx 1.7$ exceeds $1$ and the bound is vacuous, so concentration must come from majority voting rather than from accumulating audit units. The module-scoped, statement-scoped, and source-claim-unit-scoped axes additionally benefit from larger $K$, $\sum_M |M|$, or $|C(nl)|$. For instances with few modules the concentration term remains a non-trivial fraction of the bound, and we treat it accordingly in \cref{sec:conclusion}.

% =====================================================
\section{Microscopic Motivation for the Drift Assumption}\label{app:drift-derivation}
% =====================================================

\Cref{ass:drift} is motivated by a ``finite constraints + local repair'' view intrinsic to explicit languages; this appendix proves the additive form of the recursion rigorously and then identifies the modeling step that converts it to the multiplicative form of \cref{ass:drift}. The appendix does not derive \cref{ass:drift}: the additive and multiplicative forms are mathematically distinct, and the conversion is a modeling assumption adopted as a primitive of the analysis, not a logical consequence of the microscopic recursion.

\begin{definition}[Finite constraint set per instance]
For each $\widehat{P}\in\mathcal{E}$, let $\mathcal{K}(\widehat{P})=\{1,\dots,n(\widehat{P})\}$ index the locally checkable constraints induced by the prover on that instance (each statement's well-formedness, each binding site, each inference-step check). Let $v_j(\widehat{P})\in\{0,1\}$ indicate whether constraint $j$ is satisfied; these are exactly the per-item predicates that the batch-AND framing of \cref{ssec:setup} aggregates internally to a module verdict, and we use them here to expose the microscopic dynamics that motivate \cref{ass:drift}. Setting
\[
\mu^\star(\widehat{P})=\tfrac{1}{n(\widehat{P})}\sum_{j=1}^{n(\widehat{P})} v_j(\widehat{P})
\]
is the unweighted constraint-fraction view that coincides with the audit-unit aggregate of \cref{ssec:setup} when modules are sized proportionally to their internal constraint counts.
\end{definition}

The microscopic analysis attaches to each refinement step a model of how many constraints it repairs versus newly violates.

\begin{assumption}[Local repair efficacy and bounded regress]\label{ass:local-repair}
Fix iteration $i$ and let $U_i$ be the number of unsatisfied constraints in $\widehat{P}_i$. Conditional on $U_i>0$, there exist $\rho\in(0,1]$ and $\kappa\ge 0$ such that one refinement step:
(i) repairs at least one truly violated constraint with probability at least $\rho$; (ii) introduces at most $\kappa$ newly violated constraints in expectation.
\end{assumption}

If judges and localization are imperfect, $\rho$ decreases as the per-audit-unit misclassification rate $\eta$ of \cref{ass:binary} increases. A typical minimal relationship is $\rho\ge\rho_0-c\eta$ for some baseline $\rho_0$ and constant $c>0$ (problem- and implementation-dependent).

\begin{lemma}[Gap recursion from local repair]\label{lem:gap-recursion}
Under \cref{ass:local-repair}, for $U_i>0$,
\[
\mathbb{E}[U_{i+1}\mid \widehat{P}_i]\le U_i-\rho+\kappa.
\]
If the constraint count is preserved across the step, $n(\widehat{P}_{i+1})=n(\widehat{P}_i)$, then with $\mu^\star(\widehat{P}_i)=1-U_i/n(\widehat{P}_i)$ this is equivalent to
\[
\mathbb{E}[1-\mu^\star(\widehat{P}_{i+1})\mid \widehat{P}_i]\le (1-\mu^\star(\widehat{P}_i))-\tfrac{\rho-\kappa}{n(\widehat{P}_i)}.
\]
\end{lemma}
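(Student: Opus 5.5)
The statement to prove is \cref{lem:gap-recursion}. The plan is a direct bookkeeping argument on the count of unsatisfied constraints, followed by a change of variables.

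\emph{Decomposition of the step.} Fix $\widehat{P}_i$ with $U_i>0$ and split the transition $\widehat{P}_i\to\widehat{P}_{i+1}$ into two counts. Let $R_i$ be the number of constraints violated in $\widehat{P}_i$ that become satisfied, and $V_i$ the number satisfied in $\widehat{P}_i$ that become violated. With constraints indexed consistently across the step, this gives the exact identity $U_{i+1}=U_i-R_i+V_i$. Taking conditional expectations yields
\[
\mathbb{E}[U_{i+1}\mid\widehat{P}_i]=U_i-\mathbb{E}[R_i\mid\widehat{P}_i]+\mathbb{E}[V_i\mid\widehat{P}_i].
\]

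\emph{Applying \cref{ass:local-repair}.} Since $R_i\ge\mathbf{1}\{R_i\ge1\}$, part (i) gives $\mathbb{E}[R_i\mid\widehat{P}_i]\ge\mathbb{P}(R_i\ge1\mid\widehat{P}_i)\ge\rho$. Part (ii) gives $\mathbb{E}[V_i\mid\widehat{P}_i]\le\kappa$. Substituting both bounds yields the first inequality.

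\emph{Normalized form.} Assume $n(\widehat{P}_{i+1})=n(\widehat{P}_i)=:n$, which is deterministic given $\widehat{P}_i$. Then $1-\mu^\star(\widehat{P}_{i+1})=U_{i+1}/n$. Dividing the first inequality by $n>0$ and using linearity of conditional expectation gives the second display. Equivalence holds because multiplication by the positive constant $n$ is invertible.

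The only delicate point is the identity $U_{i+1}=U_i-R_i+V_i$. It requires that constraints in $\widehat{P}_{i+1}$ be matched to those in $\widehat{P}_i$, so that ``newly violated'' in (ii) is well defined. I would state this identification explicitly, using the preserved index set $\{1,\dots,n\}$, as the reading of \cref{ass:local-repair}. For the unnormalized bound I would note that if $n$ changes, any newly created violated constraints must be counted inside $V_i$ for the argument to go through.
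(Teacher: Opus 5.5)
Your proof is correct and matches the paper's argument: the paper also writes $U_{i+1}=U_i-R_i+R'_i$, bounds $\mathbb{E}[R_i\mid\widehat{P}_i]\ge\rho$ and $\mathbb{E}[R'_i\mid\widehat{P}_i]\le\kappa$ via \cref{ass:local-repair}, and divides by $n(\widehat{P}_i)$. Two details you supply are left implicit in the paper: the step $R_i\ge\mathbf{1}\{R_i\ge 1\}$, which turns the probability statement in part (i) into a bound on the expectation, and the remark that constraint indices must be matched across the step.
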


\Cref{lem:gap-recursion} gives monotone improvement whenever $\rho>\kappa$. The contractive form of \cref{ass:drift} follows once diminishing returns are acknowledged. As fewer violations remain, repairs target subtler issues and effective progress shrinks proportionally to the residual gap, yielding a contraction factor $1-\lambda$ plus a noise-dependent residual. The transition from the additive recursion of \cref{lem:gap-recursion} to the multiplicative form of \cref{ass:drift} is itself a modeling step rather than a formal derivation; we treat \cref{ass:drift} as a primitive of the analysis and rely on the empirical fit of $g_{i+1}=a g_i+c+\epsilon_i$ in \cref{app:operationalization} to validate it. A proof of \cref{lem:gap-recursion} is given in \cref{app:proofs}.

% =====================================================
\section{Amplification by Repeated Judging}\label{app:amplification}
% =====================================================

If a judge is noisy, we can reduce uncertainty by repeating local judgments and aggregating.

\begin{assumption}[Binary per-audit-unit judge with primitive error rate]\label{ass:binary}
For each property $k$, each elicited object $\widehat{P}\in\mathcal{E}$, and each audit unit $u$ of property $k$ on $\widehat{P}$ as defined in \cref{sssec:audit-units} ($\widehat{P}$ itself for $k\in\{\mathrm{glb},\mathrm{doc}\}$, a module $M\in\widehat{P}$ for $k\in\{\mathrm{syn},\mathrm{typ},\mathrm{bnd},\mathrm{gnd}\}$, an explicit statement $\widehat{p}_j\in\widehat{P}$ for $k=\mathrm{sp}$, or a source claim unit $c\in C(nl)$ for $k=\mathrm{cov}$), the latent value $q_k(u)\in\{0,1\}$ admits a judge estimate $\xi_k^{(t)}(u)\in\{0,1\}$ on the $t$-th call, $t\in\{1,2,\ldots\}$, with
\[
\mathbb{P}\big[\xi_k^{(t)}(u)\ne q_k(u)\big]\le \eta<\tfrac{1}{2}.
\]
Calls are taken to be mutually independent both across repetitions $t$ and across audit units of the same property. We take $\eta$ as a primitive per-audit-unit misclassification rate. The strict inequality $\eta<1/2$ asks only that the judge be informative on average, the necessary condition for majority-vote amplification (\cref{lem:majvote}) to apply, and is comfortably satisfied by kernel-verified judges (where $\eta\approx 0$) and by well-calibrated LLM judges on focused per-property checks.
\end{assumption}

The independence assumption is the strongest part of \cref{ass:binary} in our implementation: repeated LLM judge calls use the same model and prompt with low-temperature decoding, so the calls share correlated errors. The exponential bound of \cref{lem:majvote} should therefore be read as an upper bound under the idealised independent-call regime, and the achievable suppression in practice is closer to $\eta_{\mathrm{eff}}\propto \eta$ under high correlation. Our experiments do not exploit the exponential rate as a quantitative prediction; we report only the qualitative direction that increasing $T$ is expected to be non-worsening for the plateau under the independent-call regime, and we do not claim empirical verification on the saturated benchmarks studied here.

% =====================================================
\section{Operationalization Details}\label{app:operationalization}
% =====================================================

\subsection{Operational Objects}

Fix a target dataset such as Lean, Coq, Isabelle, or a typed logical intermediate representation with a kernel checker, and fix a per-instance fact context $\Gamma$. Each iteration produces an elicited object $\widehat{P}_i \in \mathcal{E}$ (\cref{sec:prelim}), partitioned into $K$ modules $M_1,M_2,\ldots,M_K$. In practice, each module stores explicit statements and inference links as proof steps or step-typed edges, declarations elicited from $nl$ as IR terms, and resolved symbol identifiers referenced against $\Gamma$. Optional alignment metadata ties spans or claim units back to $nl$. The object $\widehat{P}_i$ is the optimization variable, and refinement changes it through localized edits.

\paragraph{Intrinsic property vector in practice.} The full instantiation of $q$ in this paper uses the audit-unit axes of \cref{ssec:prelim-judges}, organized by the prover stage at which each is decided. The global axis $\xi_{\mathrm{glb}}$ is read off the prover's end-to-end acceptance verdict on $\widehat{P}$. The document axis $\xi_{\mathrm{doc}}$ is read off the prover's parse-only verdict on $\widehat{P}$ as a source artifact, before elaboration. Sentence-level $\xi_{\mathrm{syn}}$ and $\xi_{\mathrm{typ}}$ aggregate the parser and type-checker verdicts across statements; inter-sentence $\xi_{\mathrm{bnd}}$ aggregates the binding-resolution verdicts across statements within each module. The cross-domain axes are LLM-judged with decomposed criteria: $\xi_{\mathrm{gnd}}$ aggregates per-symbol grounding verdicts against $nl$ and $\Gamma$, $\xi_{\mathrm{sp}}$ aggregates per-statement faithfulness verdicts, and $\xi_{\mathrm{cov}}$ aggregates per-source-claim-unit completeness verdicts. Each coordinate lies in $[0,1]$, yielding $q(\widehat{P})\in[0,1]^m$ with $m=8$ in this paper (the kernel-decided and cross-domain axes enumerated in \cref{ssec:prelim-judges}).

\subsection{Judge Calibration Protocol}

The misclassification rate $\eta$ is empirically grounded by treating intrinsic checks $q$ as measurement ground truth wherever possible. Collect a calibration set of audit units sampled from AF trajectories along the scope at which each axis is defined in \cref{sssec:audit-units}: whole elicited objects $\widehat{P}$ for $k\in\{\mathrm{glb},\mathrm{doc}\}$, modules drawn from definition blocks, binding sites, and reference sites for $k\in\{\mathrm{syn},\mathrm{typ},\mathrm{bnd},\mathrm{gnd}\}$, individual explicit statements $\widehat{p}_j$ for $k=\mathrm{sp}$, and source claim units in $nl$ for $k=\mathrm{cov}$.

For each audit unit $u$ and property $k$, compute the ground-truth value $q_k(u)$ through the kernel, the released gold formalization, or a small reviewer-facing annotation subset (depending on benchmark family; see \cref{ssec:judge-cal}), and the judge value $\xi_k(u)$ through the corresponding judge. Estimate per-property error rates:
\[
\hat\eta_k=\mathbb{P}[\xi_k(u)\ne q_k(u)].
\]
Define the global $\hat\eta$ as $\max_k \hat\eta_k$, the conservative aggregator that matches the per-audit-unit bound of \cref{ass:binary}; the $w$-weighted mean of $\hat\eta_k$ is a strictly smaller quantity that we record only as a diagnostic of average judge reliability. If judge calls are stochastic, repeat them $T$ times on the same audit unit and aggregate by majority vote. Increasing $T$ pushes the per-audit-unit misclassification rate down exponentially via \cref{lem:majvote} and the plateau closer to $1$, consistent with the amplification lever of the proxy-convergence law (\cref{sec:theory}).

\subsection{Refinement Operators and Bounded Regress}

Define a finite set of repair operators $\mathcal{A}$ that act on a localized target $t$, as listed in \cref{sec:exp}. The set covers binding, definition, step, and coverage repairs. Each operator must act on a small context, and its effect on $q$ must be verifiable. For each iteration record
\[
\begin{aligned}
\Delta q_i        &= q(\widehat{P}_{i+1}) - q(\widehat{P}_i),\\
\Delta\mu^\star_i &= \mu^\star(\widehat{P}_{i+1}) - \mu^\star(\widehat{P}_i).
\end{aligned}
\]
Bounded regress is empirically supported if negative jumps are rare and shallow, or are compensated by subsequent iterations.

\subsection{Estimating Drift Parameters}

The theorem-level recursion $\mathbb{E}[1-\mu^\star_{i+1}]\le (1-\lambda)(1-\mu^\star_i)+b\eta$ can be fit empirically without overfitting. Let $g_i=1-\mu^\star_i$ and fit $g_{i+1}=a\,g_i+c+\epsilon_i$ with $a\approx 1-\lambda$ and $c\approx b\eta$. Estimate confidence intervals by bootstrapping over instances. The predicted plateau is $g_\infty\approx c/(1-a)$, equivalently $\mu^\star_\infty\approx 1-c/(1-a)$. To validate the $\eta$-dependence rather than merely fit a curve, vary $\eta$ experimentally. The interventions can change the judge context window size, change $T$, inject controlled noise into judge outputs, or swap judge models. If $c$ scales linearly with measured $\eta$ while $a$ is stable or degrades predictably, this supports the plateau theory.

% =====================================================
\section{Proofs}\label{app:proofs}
% =====================================================

\begin{proof}[Proof of \cref{thm:plateau}]
Taking expectations on both sides of \cref{ass:drift} gives
\[
\mathbb{E}[g_{i+1}]\le (1-\lambda)\,\mathbb{E}[g_i]+b\,\eta.
\]
Unrolling the recursion,
\[
\begin{aligned}
\mathbb{E}[g_i] &\le (1-\lambda)^i\,\mathbb{E}[g_0]+b\,\eta\sum_{k=0}^{i-1}(1-\lambda)^k \\
                &= (1-\lambda)^i\,\mathbb{E}[g_0]+\tfrac{b\,\eta\bigl(1-(1-\lambda)^i\bigr)}{\lambda}.
\end{aligned}
\]
Since $\lambda\in(0,1]$, $(1-\lambda)^i\to 0$ as $i\to\infty$, hence $\limsup_i\mathbb{E}[g_i]\le b\eta/\lambda$. Equivalently, $\liminf_i\mathbb{E}[\mu^\star(\widehat{P}_i)]\ge 1-b\eta/\lambda$. When $\eta=0$, the plateau distance vanishes.
\end{proof}

\begin{proof}[Proof of \cref{lem:gap-recursion}]
Let $R_i$ be the number of constraints repaired at cycle $i$ and $R'_i$ the number of newly violated constraints introduced. Conditional on $U_i>0$, \cref{ass:local-repair}(i) gives $\mathbb{E}[R_i\mid\widehat{P}_i]\ge \rho$, and \cref{ass:local-repair}(ii) gives $\mathbb{E}[R'_i\mid\widehat{P}_i]\le \kappa$. The update is $U_{i+1}=U_i-R_i+R'_i$, so $\mathbb{E}[U_{i+1}\mid\widehat{P}_i]\le U_i-\rho+\kappa$. Dividing by $n(\widehat{P}_i)$ gives the stated recursion on $1-\mu^\star$.
\end{proof}

\begin{proof}[Proof of \cref{lem:majvote}]
Fix $k$, $\widehat{P}$, and an audit unit $u$ of property $k$ on $\widehat{P}$ as defined in \cref{sssec:audit-units}. Under \cref{ass:binary}, the $T$ independent judge calls $\{\xi_k^{(t)}(u)\}_{t=1}^{T}$ have error indicators with mean at most $\eta<1/2$. The majority vote errs only if strictly more than $T/2$ of the calls err; by Hoeffding's inequality,
\[
\begin{aligned}
&\mathbb{P}\!\left[\tfrac{1}{T}\sum_{t=1}^{T}\mathbf{1}[\xi_k^{(t)}(u)\ne q_k(u)]>\tfrac{1}{2}\right] \\
&\qquad\le \exp\!\big(-2T(\tfrac{1}{2}-\eta)^2\big).
\end{aligned}
\qedhere
\]
\end{proof}

\begin{proof}[Proof of \cref{lem:proxy-accuracy}]
Fix $\widehat{P}$, and let $N_k(\widehat{P})$ denote the audit-unit count for property $k$ as defined in \cref{sssec:audit-units}: $N_k(\widehat{P})=1$ for $k\in\{\mathrm{glb},\mathrm{doc}\}$, $N_k(\widehat{P})=K$ for $k\in\{\mathrm{syn},\mathrm{typ},\mathrm{bnd},\mathrm{gnd}\}$, $N_{\mathrm{sp}}(\widehat{P})=\sum_M |M|$, and $N_{\mathrm{cov}}(\widehat{P})=|C(nl)|$. By \cref{ass:binary} the judge values $\xi_k(u)\in\{0,1\}$ over the audit units of property $k$ are mutually independent, so $\xi_k(\widehat{P})-q_k(\widehat{P})$ is the centered average of $N_k(\widehat{P})$ independent indicators in $[-1,1]$, each with $\mathbb{E}|\xi_k(u)-q_k(u)|\le\eta$. Hoeffding's inequality on the bounded summands $\xi_k(u)\in[0,1]$ around their means gives, for any $t>0$,
\[
\mathbb{P}\!\big(|\xi_k(\widehat{P})-\mathbb{E}\xi_k(\widehat{P})|>t\big)\le 2\exp(-2N_k(\widehat{P})\,t^2),
\]
and the bias bound $|\mathbb{E}\xi_k(\widehat{P})-q_k(\widehat{P})|\le\eta$ yields $|\xi_k(\widehat{P})-q_k(\widehat{P})|\le\eta+t$ on the same high-probability event. A union bound over $k\in\{1,\ldots,m\}$ gives
\[
\begin{aligned}
&\mathbb{P}\!\Big(\max_{k}|\xi_k(\widehat{P})-q_k(\widehat{P})|>\eta+t\Big) \\
&\qquad\le 2m\exp\!\big(-2\,N_{\min}(\widehat{P})\,t^2\big),
\end{aligned}
\]
where $N_{\min}(\widehat{P})=\min_{k:\,w_k>0} N_k(\widehat{P})$. Choosing $t=\sqrt{\log(2m/\delta)/(2N_{\min}(\widehat{P}))}$ makes the right-hand side equal to $\delta$. Since $w\in\mathbb{R}^m_{\ge 0}$ with $\|w\|_1=1$, $|\widehat{\mu}(\widehat{P})-\mu^\star(\widehat{P})|\le\sum_k w_k|\xi_k-q_k|\le\max_k|\xi_k-q_k|$, and the stated bound follows.
\end{proof}

% =====================================================
\section{Theory-Aligned Aggregates}\label{app:h1_aggregates}
% =====================================================

\Cref{tab:e1_results} reports the cross-backbone aggregate of the multi-seed trajectory evaluation along the proxy-side columns $\mu^\star_0$, $\mu^\star_\infty$, and pass rate, complementing the per-backbone pass-rate breakdown in \cref{tab:h1_per_backbone} in the main body.

\begin{table*}[h]
\centering
\caption{\textbf{End-to-End Trajectories: Cross-Backbone Aggregate.} Each row pools instances from all seven backbones and both methods (single-shot baseline and our refinement framework). $\mu^\star_0$ and $\mu^\star_\infty$ are the initial and final audit-unit scores; \textbf{Pass Rate} (\%) is the prover's acceptance on $\widehat{P}_\infty$.}
\label{tab:e1_results}
\small
% tab:e1_results: E1_main_trajectory_sweep (paper-faithful audit-unit mu*).
% lambda_hat / eta_eff columns removed: calibration parameters discussed qualitatively in main.tex §judge-cal.
% coverage: incomplete_cells=1, not_run_cells=0; incomplete rows annotate observed/expected fixture totals.
% \label{tab:e1_results}
\begin{tabular}{lllccc}
\toprule
\textbf{Domain} & \textbf{Prover} & \textbf{Benchmark} & $\boldsymbol{\mu^\star_0}$ & $\boldsymbol{\mu^\star_\infty}$ & \textbf{Pass Rate (\%)} \\
\midrule
Explanation & Isabelle & e-SNLI & 0.95{\tiny$\pm$0.09} & 0.96{\tiny$\pm$0.09} & 81.6 \\
Explanation & Isabelle & ProntoQA (2799/2800) & 0.99{\tiny$\pm$0.05} & 0.99{\tiny$\pm$0.03} & 98.1 \\
Math & Lean & miniF2F & 0.94{\tiny$\pm$0.10} & 0.95{\tiny$\pm$0.11} & 78.8 \\
Math & Lean & ProofNet & 0.89{\tiny$\pm$0.12} & 0.89{\tiny$\pm$0.14} & 47.6 \\
\bottomrule
\end{tabular}

\end{table*}

% =====================================================
\section{Dataset Details}\label{app:dataset-details}
% =====================================================

\begin{table}[h]
\centering
\small
\caption{\textbf{Benchmark Details.} $n$ is the evaluation sample size; ``judge-stress'' indicates which $q$ coordinates are LLM-estimated on this dataset.}
\label{tab:dataset-details}
\setlength{\tabcolsep}{2pt}
\begin{tabular}{@{}lllll@{}}
\toprule
\textbf{Benchmark} & \textbf{Prover} & $\boldsymbol{n}$ & \textbf{Split} & \textbf{Judge-Stressed Axes} \\
\midrule
miniF2F   & Lean 4   & 244 & Test & $\xi_{\mathrm{gnd}}, \xi_{\mathrm{sp}}$ \\
ProofNet  & Lean 4   & 182 & Test & $\xi_{\mathrm{gnd}}, \xi_{\mathrm{sp}}, \xi_{\mathrm{cov}}$ \\
e-SNLI    & Isabelle & 100 & Test & $\xi_{\mathrm{sp}}, \xi_{\mathrm{cov}}$ \\
ProntoQA  & Isabelle & 200 & Dev  & $\xi_{\mathrm{sp}}, \xi_{\mathrm{cov}}$ \\
\bottomrule
\end{tabular}
\end{table}

\Cref{tab:dataset-details} summarizes sample sizes, splits, and judge-stressed coordinates per benchmark. For the two formal-proof benchmarks (miniF2F, ProofNet) we use the standard test splits in their entirety. The four-dataset axiomatization stresses different subsets of the per-axis $q$ vector: on Lean both grounding $\xi_{\mathrm{gnd}}$ (for symbol-resolution against \texttt{mathlib}) and statement-level semantic preservation $\xi_{\mathrm{sp}}$ are judge-estimated; on ProofNet coverage $\xi_{\mathrm{cov}}$ additionally enters because the library-alignment requirement leaves residual claim units that the candidate may not represent. On Isabelle both $\xi_{\mathrm{sp}}$ and $\xi_{\mathrm{cov}}$ are judge-estimated because the natural-language source admits multiple valid Davidsonian or FOL renderings.

For e-SNLI we follow the sampling strategy of \citet{valentino2021nli}, which selects instances to maximize representativeness and mutual exclusivity across syntactic and semantic features; our subset draws 100 entailment-labeled items for which the bidirectional ATP oracle has a well-defined positive ground truth. For ProntoQA we draw a 200-sample subset from the official dev split with approximately balanced \textbf{True}- and \textbf{False}-label rule chains (103/97). All splits are disjoint from the few-shot demonstration pools described in \cref{app:baseline-direct}.

% =====================================================
\section{Direct Few-Shot Baseline Configuration}\label{app:baseline-direct}
% =====================================================

The baseline used in \cref{ssec:h1} runs the same seven backbones at temperature $0.2$ producing one sample per instance, with no resampling or majority voting; the instance set per benchmark matches the main evaluation. The few-shot pools contain three demonstrations per (dataset, benchmark) setting, matched to the target distribution.

For miniF2F and ProofNet the three Lean~4 demonstrations are drawn from the validation split (disjoint from the test split used for evaluation): the miniF2F pool covers a no-hypothesis numerical identity, an $\mathbb{N}$-typed divisibility goal with a single hypothesis, and a real-typed proportion goal; the ProofNet pool covers a negation-of-existence statement, a goal with an instance-binder hypothesis, and a polynomial-irreducibility statement that exercises typeclass binders.

For e-SNLI and ProntoQA no published Davidsonian or FOL Isabelle pool is available. We disclose the data-provenance asymmetry explicitly: each three-shot pool is drawn from clean iter-1 generations in our own loop runs, with instances held out from the test split used for evaluation; selection criteria are kernel acceptance, zero failed proxy dimensions, and stylistic regularity. Concretely, the e-SNLI pool covers an event-typed entailment, an entity-typed entailment with a single attribute, and a multi-attribute entailment; the ProntoQA pool covers two \textbf{True}-label rule chains and one \textbf{False}-label chain so that the literal-hypothesis \texttt{shows} clause is illustrated for both polarities.

This setup favors the baseline relative to a hypothetical pool drawn from unfiltered LLM outputs and therefore makes the baseline a tighter (rather than looser) point of comparison; on datasets where iter-1 generation quality is itself a confound (miniF2F, ProofNet), we use the disjoint validation pools above. The single-shot output is scored through the same evaluation pipeline as the loop (kernel prover plus per-axis proxy $\xi$); the external oracle of \cref{ssec:judge-cal} is not run on baseline outputs.

\section{Extended Judge-Calibration Analysis}\label{app:judge-cal-extended}

This appendix collects analysis material moved out of \cref{ssec:judge-cal} for space. The oracle referenced throughout is the approximate stand-in for the gold label introduced in \cref{ssec:judge-cal}.

\paragraph{The oracles are themselves proxies, not ground truth.}
On math (miniF2F, ProofNet) a ref-anchored GPT-5.4 judge produces a binary \texttt{ours\_correct} verdict from the natural-language statement, gold reference, and candidate; on explanation (e-SNLI, ProntoQA), a bidirectional ATP probes the hypothesis and its negation against the candidate's formal context, and the pair is mapped through the dataset label. Neither the math oracle nor the explanation oracle is reference-grade. The math oracle compares two LLMs at different capacity tiers, and the explanation oracle can be satisfied incidentally by an over-permissive axiomatization. We therefore decline to identify the oracle distance with $\eta$ and report no point estimate of $\widehat\lambda$ or $\widehat\eta_{\mathrm{eff}}$.

\paragraph{Oracle and proxy bands measure different noise sources.}
The seed-bands on the DeepSeek-V3.1 row carry different widths for the two curves. The proxy curve reports the driver-visible composite (kernel-decided dimensions and the LLM-judged SP/COV verdicts) and is the signal the loop early-stops on; per-instance verdicts feed back into refinement so the cohort mean converges narrowly across seeds ($\sigma\!\le\!3.0\%$ on every dataset, peaking on ProofNet and tightening to $1.2\%$ on ProntoQA). The oracle curve reports a separate per-instance whole-object verdict from a deterministic external scorer queried independently of refinement; with no aggregation across axes and no driver coupling, the cohort-mean seed-band is a direct projection of how much the final formalization varies instance-by-instance across driver seeds. On miniF2F that variability is largest ($\sigma\!\approx\!5.6\%$); on ProofNet ($3.7\%$), e-SNLI ($2.0\%$), and ProntoQA ($1.8\%$) the dataset's tighter acceptance template narrows the dispersion.

\paragraph{Proxy and oracle move in opposite directions on saturating Isabelle ProntoQA.}
The DeepSeek-V3.1 ProntoQA panel is the only setting where the proxy rises monotonically while the oracle declines: between iter $1$ and iter $2$ the proxy gains $1.4\%$ and the oracle pass rate drops by $2.4\%$ across all three seeds, so the divergence is not a cohort-aggregation artifact. Iter-$1$ candidates already satisfy the kernel and most per-axis judges at $\widehat\mu\approx 0.97$, and the loop's residual repair signal targets SP and COV verdicts on a small remainder. The bidirectional ATP probe is more sensitive to axiomatization soundness than to any single hypothesis's derivability, and tightening the axiomatization in service of SP/COV can eliminate the incidental derivability paths the iter-$1$ over-permissive axiomatization relied on, dropping a handful of fixtures from \texttt{yes} to \texttt{partial}. The other three panels show no such divergence because their iter-$1$ proxy is well below ceiling.

\paragraph{Cross-seed oracle voting reduces single-sample LLM-judge noise in the calibration setup.}
The math oracle (temperature-$0$ ref-anchored GPT-5.4) is deterministic at the call level but its tristate output, $\{\texttt{yes},\texttt{no},\texttt{partial}\}$, is per-instance and can collapse to a different bucket on small perturbations of the candidate. On DeepSeek-V3.1 ProofNet the per-instance terminal verdict differs across three driver seeds for $11.5\%$ of instances even after restricting to non-abstention seeds ($35\%$ of seed-instance pairs land on \texttt{partial} and are treated as abstentions). A majority rule (\texttt{yes}/\texttt{no}/\texttt{partial}\,$\!\to\!\!+1/{-1}/0$; sign of the sum) keeps the cohort pass rate within $\pm 1.2\%$ of any seed ($[80.9, 82.1]\%$ across four views) and cuts abstentions from $51$--$61$ per cohort to $48$.

\end{document}